\theoremstyle{plain}
\newtheorem{theorem}{Theorem}
\newtheorem{corollary}{Corollary}
\title{Reward-Weighted Sampling: \\ Enhancing Non-Autoregressive Characteristics in Masked Diffusion LLMs}
\author{
  Daehoon Gwak$^{1}$\thanks{Equal contribution.} \quad
  Minseo Jung$^{2}$\footnotemark[1] \quad
  Junwoo Park$^{1}$ \quad
  Minho Park$^{1}$ \\
  \textbf{ChaeHun Park$^{1}$ \quad
  Junha Hyung$^{1}$ \quad
  Jaegul Choo$^{1}$} \\\\
  $^{1}$KAIST AI \\
  $^{2}$Applied Artificial Intelligence, Sungkyunkwan University \\
  \texttt{\{daehoon.gwak, jchoo\}@kaist.ac.kr} \quad
  \texttt{jms020123@g.skku.edu}
}
\begin{document}
\maketitle
\begin{abstract}
Masked diffusion models (MDMs) offer a promising non-autoregressive alternative for large language modeling. 
Standard decoding methods for MDMs, such as confidence-based sampling, select tokens independently based on individual token confidences at each diffusion step.
However, we observe that this independent token selection often results in generation orders resembling sequential autoregressive processes, limiting the advantages of non-autoregressive modeling.
To mitigate this pheonomenon, we propose Reward-Weighted Sampling (RWS), a novel decoding strategy that leverages an external reward model to provide a principled global signal during the iterative diffusion process. 
Specifically, at each diffusion step, RWS evaluates the quality of the entire intermediate sequence and scales token logits accordingly, guiding token selection by integrating global sequence-level coherence.
This method selectively increases the confidence of tokens that initially have lower scores, thereby promoting a more non-autoregressive generation order.
Furthermore, we provide theoretical justification showing that reward-weighted logit scaling induces beneficial rank reversals in token selection and consistently improves expected reward.
Experiments demonstrate that RWS significantly promotes non-autoregressive generation orders, leading to improvements across multiple evaluation metrics.
These results highlight the effectiveness of integrating global signals in enhancing both the non-autoregressive properties and overall performance of MDMs.
\end{abstract}

\section{Introduction}
Large Language Models (LLMs) have achieved remarkable success across diverse natural language tasks, predominantly through autoregressive models (ARMs) \citep{brown2020language,lewis2020bart}, which generate text sequentially, token by token. 
Recently, large-scale Masked Diffusion Models (MDMs), such as LLaDA \citep{Nie2025LLDM}, have emerged as promising non-autoregressive alternatives. 
Unlike ARMs, MDMs iteratively unmask tokens in parallel, leveraging full bidirectional context.
This approach can mitigate cumulative errors commonly seen in autoregressive generation, where early prediction errors propagate through subsequent tokens, often causing semantic drift and reduced coherence in longer sequences \citep{bengio2015sampling,renzato2016seq}.

\begin{figure}[t]
\centering
\includegraphics[width=\columnwidth]{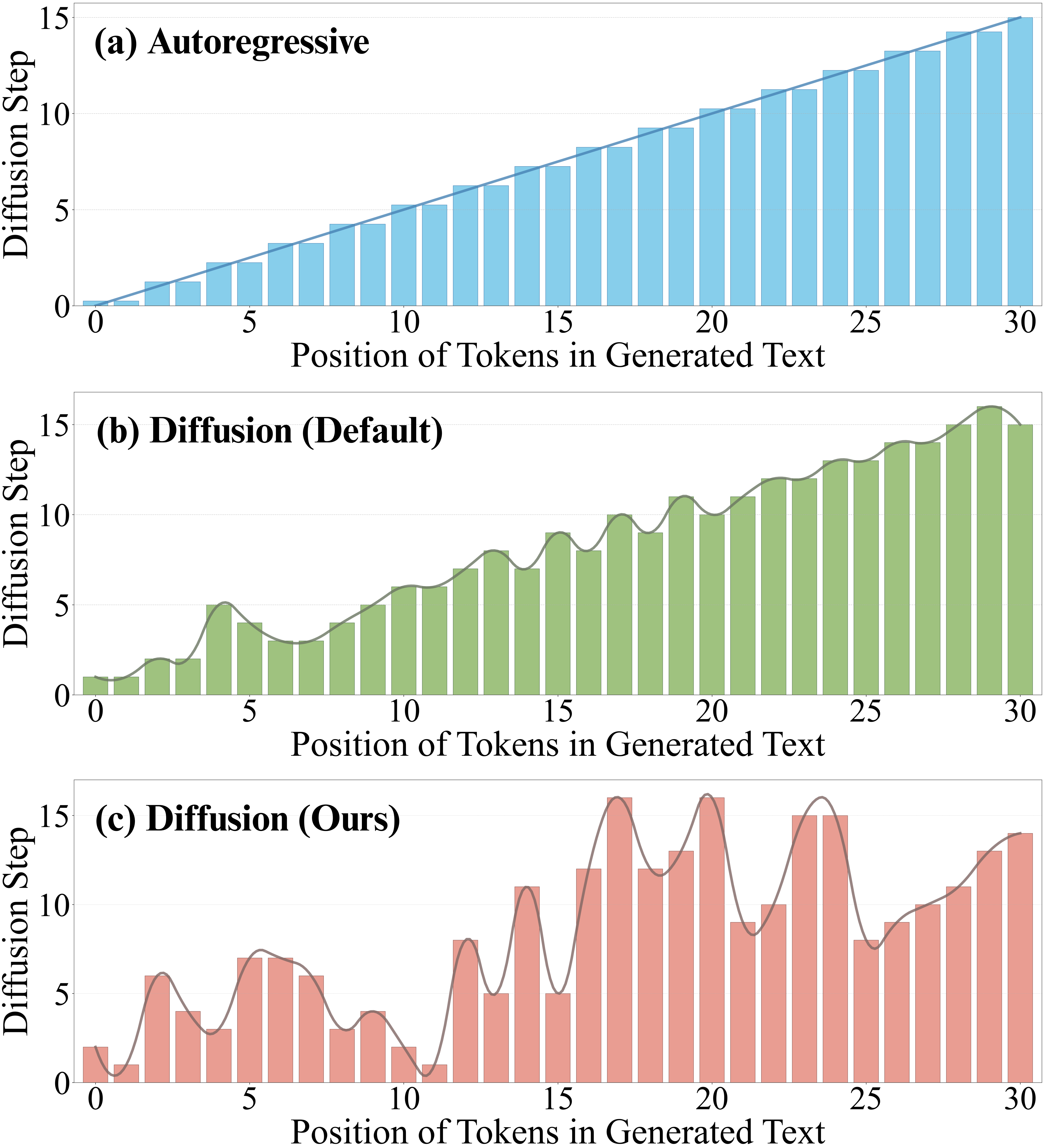}
\vspace{-5mm}
\caption{
Representative example illustrating token selection order for (a) Autoregressive (top), (b) Diffusion LLM with confidence-based sampling (middle), and (c) our proposed Reward-Weighted Sampling (RWS) method (bottom). 
The x-axis shows token positions within the generated sequence, and the y-axis shows the diffusion step at which two tokens are selected.
Confidence-based sampling closely follows the sequential autoregressive pattern, while RWS promotes a distinctly non-autoregressive selection pattern.}
\vspace{-5mm}
\label{fig:motivation}
\end{figure}

Despite their inherent non-autoregressive potential, decoding methods specifically tailored to MDMs are relatively under-explored.
Standard decoding approaches for MDMs, such as confidence-based sampling \citep{Nie2025LLDM}, rely solely on individual token confidences at each diffusion step, selecting tokens with the highest confidence score to unmask. 
As illustrated in Figure~\ref{fig:motivation}, this token selection method often produces generation orders closely resembling sequential autoregressive decoding, represented by a diagonal selection pattern.
This phenomenon could be interpreted as a position bias effect commonly observed in masked language models \citep{alex2019bert,ghazvin2019mask}, where masked tokens adjacent to already unmasked tokens tend to receive higher confidence scores. 
Consequently, the model often unmasks tokens in a locally sequential manner, potentially limiting its ability to fully exploit global, bidirectional context and impacting tasks that require global coherence or logical consistency.

To mitigate this limitation, we propose Reward-Weighted Sampling (RWS), a novel decoding method explicitly designed to exploit the non-autoregressive characteristics of MDMs by introducing a global preference signal. 
At each diffusion step, RWS evaluates the quality of intermediate sequences generated by the model and computes a global reward signal using an external reward model. 
This global reward reflects the overall coherence and quality of the intermediate sequence, beyond the local token-level confidences. 
The logits of individual tokens are then scaled according to this reward, selectively boosting tokens initially assigned lower local confidence scores, thus promoting a more diversified, non-autoregressive generation order as illustrated in Figure~\ref{fig:motivation}.

Furthermore, we provide rigorous theoretical analysis demonstrating how reward-weighted logit scaling influences token selection. 
Specifically, we derive precise conditions under which multiplicative logit scaling induces beneficial rank reversals in token probabilities (Theorem~\ref{thm:sec5-reversal}), thereby promoting more flexible generation patterns. 
Additionally, we prove that reward-based logit scaling guarantees monotonically improved expected reward at each decoding step (Theorem~\ref{thm:reward-monotone-main}), providing theoretical justification for the effectiveness of our method in exploiting non-autoregressive decoding to enhance sequence generation quality.

In summary, this paper makes the following contributions:
\begin{itemize}
    \item We introduce Reward-Weighted Sampling, a decoding method that integrates global reward signals to systematically encourage non-autoregressive token selection.
    \item We provide theoretical analyses proving that logit scaling causes principled rank reversals in token selection and guarantees improvement of expected rewards during decoding, establishing the mathematical foundation for enhanced non-autoregressive behavior.
    \item Extensive experiments across diverse tasks demonstrate that our method consistently improves generation quality compared to baseline methods, while showing a strong correlation between non-sequential generation, measured by our proposed metric, Generation Order Deviation (GOD).
\end{itemize}

Our findings indicate that integrating a meaningful global reward signal effectively enhances the non-autoregressive capabilities and overall generation quality of diffusion-based language models.

\section{Background: Standard Sampling for Masked Diffusion Models}
\label{sec:background_sampling}

In this section, we introduce the standard decoding methods used in MDMs, highlighting the limitations we aim to address. For a detailed explanation of the underlying MDM framework itself, we refer readers to Appendix~\ref{app:mdm}.

Generating text from a pre-trained MDM, such as generating a response $r_0$ conditioned on a prompt $p_0$, involves simulating the reverse diffusion process in discrete steps.
Let the target generation length be $L'$.
The process starts with an initial sequence $x^{(T)} = [p_0, \texttt{[MASK]}, \dots, \texttt{[MASK]}]$, where the prompt $p_0$ is followed by $L'$ mask tokens, denoted as \texttt{[MASK]}.
The goal is to iteratively denoise this sequence over $T$ discrete steps (indexed $t = T, T-1, \dots, 1$) to obtain the final sequence $x^{(0)} \approx [p_0, r_0]$.

The standard iterative sampling procedure at step $t$ (transitioning from state $x^{(t)}$ to $x^{(t-1)}$) typically involves three steps. \textbf{Step 1 (Prediction):} The current partially masked sequence $x^{(t)}$ is fed into the trained mask predictor $p_{\theta}$ to obtain logits for all masked positions. Let $\hat{x}_0$ be the sequence obtained by replacing each masked position with the most likely original token predicted from these logits (e.g., via $\mathrm{argmax}$). \textbf{Step 2 (Token Selection):} A subset of masked positions in $x^{(t)}$ is selected for updating, typically the top $k_t$ positions with the highest confidence, ensuring gradual denoising (\textit{e.g}., approximately $L’/T$ tokens per step). \textbf{Step 3 (Update):} The next state $x^{(t-1)}$ is formed by replacing the selected $k_t$ masked positions in $x^{(t)}$ with their corresponding predicted tokens from $\hat{x}_0$.

A common strategy for Step 2, employed by LLaDA and forming the baseline for our work, is \textbf{Confidence-Based Sampling}. In this approach, token confidence is computed from softmax probabilities $p_{\theta}(\hat{x}_0 \mid x^{(t)})$, the top $k_t$ tokens with highest confidence are selected, and these tokens are unmasked in $x^{(t-1)}$ while the rest remain masked for subsequent refinement.

It is important to note that prior studies~\citep{ghazvin2019mask,alex2019bert} have demonstrated that in masked language models, prediction confidence is heavily influenced by surrounding context. Specifically, masked tokens directly adjacent to already unmasked tokens—such as prompt tokens—tend to receive higher confidence scores due to their immediate contextual proximity. As a result, tokens closer to previously unmasked content are generally selected first, causing a positional bias in which the model tends to unmask tokens sequentially from left to right, despite its inherent bidirectional architecture. This positional bias unintentionally promotes autoregressive-like decoding behavior, potentially limiting the model’s ability to leverage the full bidirectional context effectively.\footnote{For a broader overview of diffusion language models and other related works, please refer to Appendix~\ref{app:related_work}.}

\begin{figure}[t]
\centering
\includegraphics[width=\columnwidth]{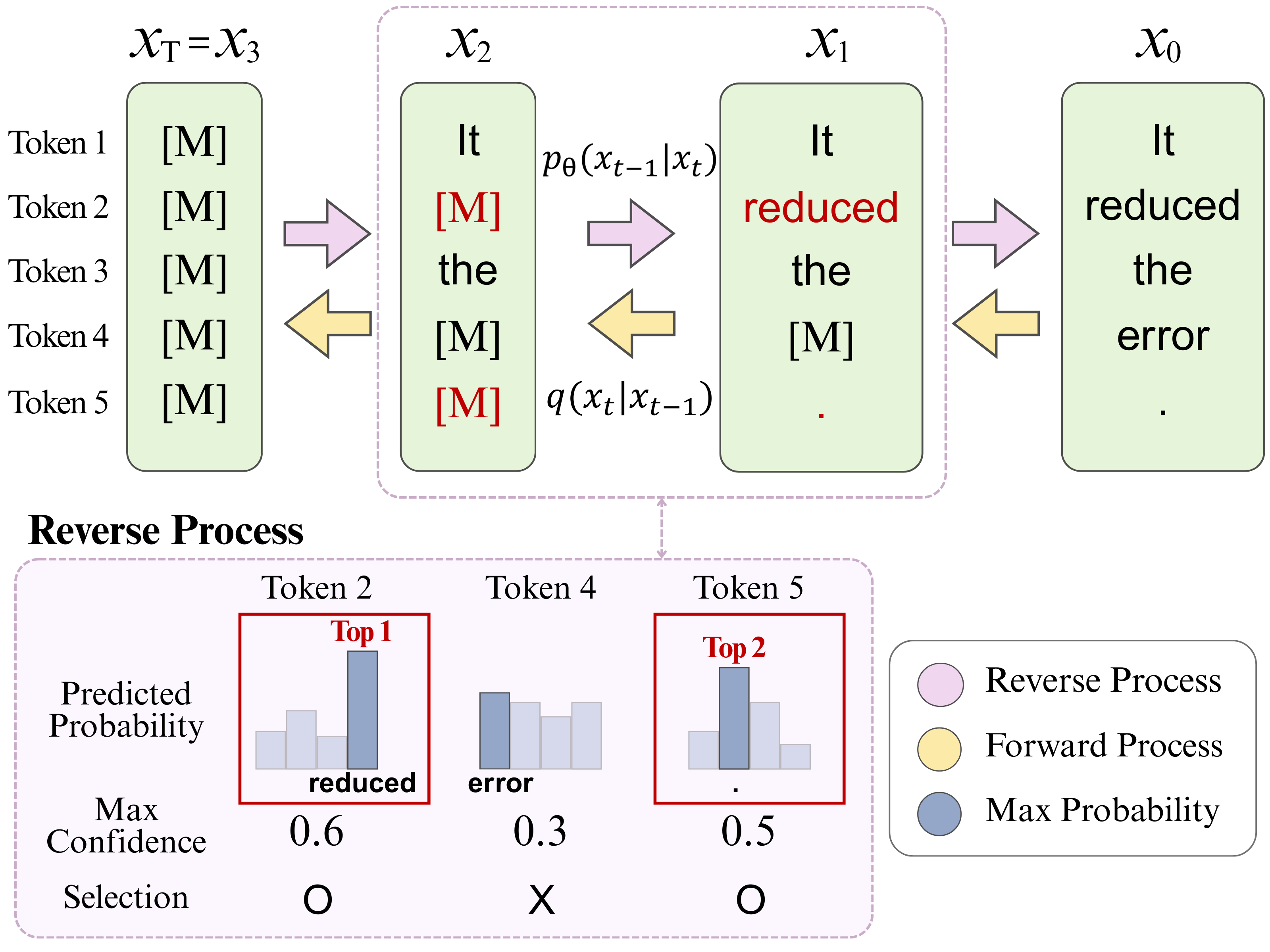}
\vspace{-5.5mm}
\caption{Illustration of the Masked Diffusion Model (MDM) forward (masking) and reverse (unmasking) processes. Starting from the fully masked sequence ($x_\text{T}$), the reverse process progressively reveals tokens using predicted probabilities at each diffusion step. The highlighted section illustrates token selection between steps $x_2$ and $x_1$: tokens with the highest prediction confidence are unmasked first (tokens 2 and 5), while lower-confidence tokens (token 4) remain masked.}
\vspace{-4mm}
\label{fig:mdm}
\end{figure}

\begin{figure*}[t]
\centering
\includegraphics[width=\textwidth]{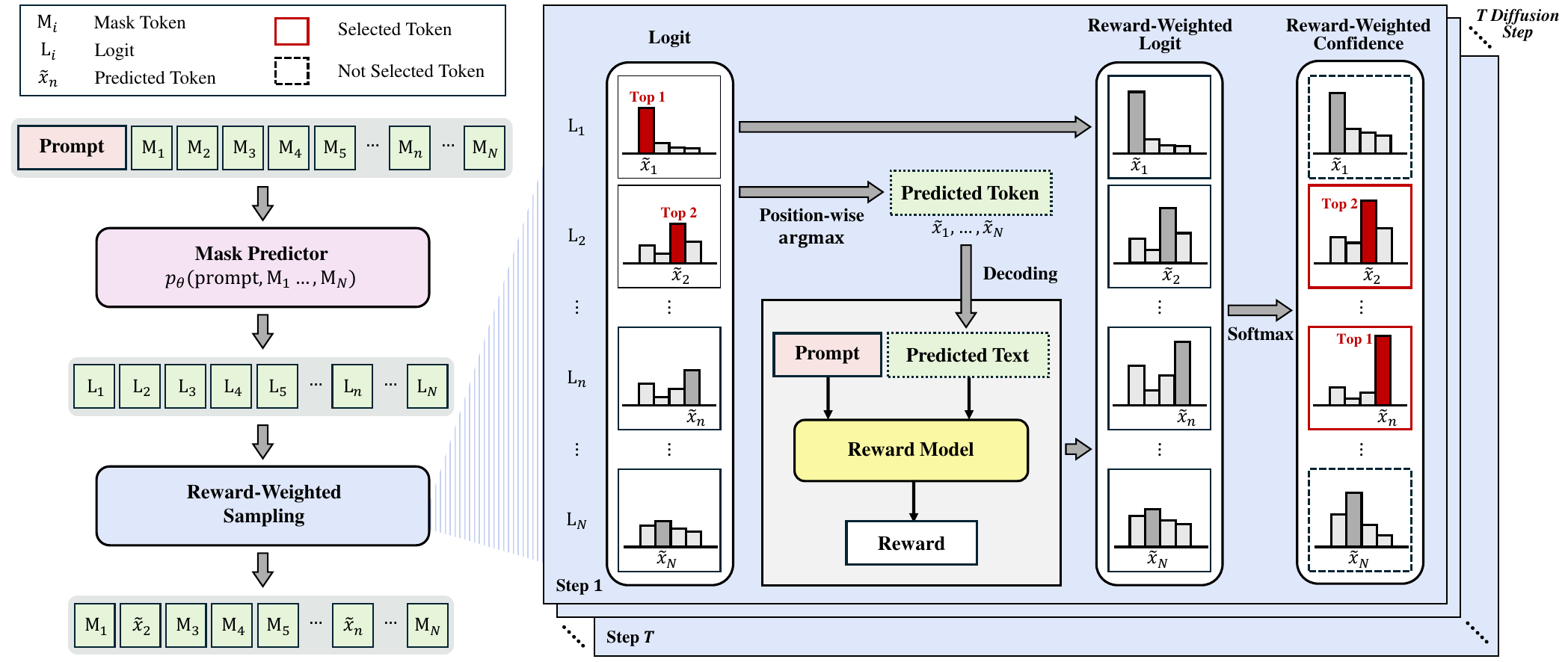}
\vspace{-5mm}
\caption{Overview of Reward-Weighted Sampling (RWS) method. At each diffusion step, the model predicts candidate tokens based on local confidence scores. An external reward model then evaluates the global quality of the predicted sequence, providing a reward-based signal. This signal adaptively scales logits, adjusting token selection priorities to promote non-autoregressive generation patterns and improve overall sequence coherence.}
\label{fig:method}
\vspace{-5mm}
\end{figure*}

\section{Methodology: Reward-Weighted Sampling (RWS)}
\label{sec:method}

\subsection{Overview}
\label{sec:method_motivation}
Reward-Weighted Sampling (RWS) explicitly guides the token selection process during MDM decoding by incorporating feedback from an external reward model. To achieve this, RWS introduces a global reward signal into the iterative decoding process. Multiplicative scaling of logits provides an intuitive mechanism for incorporating such global signals, as scaling logits directly adjusts the differences between token probabilities due to the exponential nature of the softmax function \citep{hinton2015distill, holtzman2020nuc}. 
Specifically, increasing logits amplifies probability gaps, thereby reinforcing the selection of high-confidence tokens, whereas decreasing logits reduces these gaps, enabling lower-confidence tokens to become comparatively more likely. Utilizing an external reward model to guide this scaling ensures that token selection adjustments meaningfully reflect the quality of candidate sequences, thus avoiding arbitrary or potentially harmful modifications.

\subsection{Reward-Weighted Sampling Algorithm}
\label{sec:method_algorithm}
RWS modifies the standard iterative MDM sampling process described in Section~\ref{sec:background_sampling}. 
Starting from the initial state $x^{(T)} = [p_0, \texttt{[MASK]}, \dots, \texttt{[MASK]}]$ (prompt $p_0$ followed by $L'$ mask tokens), RWS proceeds iteratively for $T$ steps ($t = T, T-1, \dots, 1$) to produce the final sequence $x^{(0)}$.
Specifically, at each diffusion step $t$ (going from masked sequence $x^{(t)}$ to the next state $x^{(t-1)}$), RWS follows these four steps:

\textbf{Step 1: Potential Full Sequence Prediction.}
At the current step $t$ with state $x^{(t)}$, we first generate a full candidate sequence sequence $\tilde{x}_0^{(t)}$.
Specifically, we obtain logits $L^{(t)}$ from the mask predictor $p_{\theta}$ given the current masked sequence $x^{(t)}$, and then greedily predict the most likely token ($\mathrm{argmax}$) for all currently masked positions.
Let $M^{(t)}$ be the set of indices corresponding to masked positions in $x^{(t)}$.
Then, $\tilde{x}_0^{(t)}$ is formed by setting $\tilde{x}_{0}^{j} = \mathrm{argmax}_{\text{token}} L^{(t)}_{j, \text{token}}$ for $j \in M^{(t)}$, and $\tilde{x}_{0}^{j} = x^{(t), j}$ for $j \notin M^{(t)}$.
This $\tilde{x}_0^{(t)}$ represents a possible completed output given the current masked context $x^{(t)}$.

\textbf{Step 2: Reward Evaluation and Normalization.}
The generated part (response $r_0$) within the potential full sequence $\tilde{x}_0^{(t)}$ is extracted and decoded into text. 
This text, along with the original prompt $p_0$, is evaluated by a external reward model $R(p_0, r_0)$ to obtain a raw reward score $r_{raw}^{(t)}$.
As shown in Appendix~\ref{app:reward_stats}, due to significant variations in reward statistics across different reward models, we normalize the raw reward using pre-computed mean $\mu_R$ and standard deviation $\sigma_R$ from the Nectar validation set~\cite{starling2023}, computed as $r^{(t)} = (r_{raw}^{(t)} - \mu_R) / \sigma_R$. 
This normalized reward $r^{(t)}$ reflects the relative quality of the potential completion compared to typical outputs.

\textbf{Step 3: Reward-Weighted Logit Scaling.}
This is the core step where the reward signal directly influences token selection priorities by modifying the original logits $L^{(t)}$ obtained from $p_{\theta}(x^{(t)})$ using the normalized reward $r^{(t)}$. Specifically, the modified logits $L'^{(t)}$ are calculated as:
\[
L'^{(t)}_{j, \text{token}} = L^{(t)}_{j, \text{token}} \times s_R \times \sqrt{\sigma(r^{(t)}) + \epsilon}
\]
for all positions $j$ and tokens. Here, $s_R$ is the reward scale hyperparameter controlling the guidance strength, $\sigma(\cdot)$ is the sigmoid function, and $\epsilon$ is a small constant (e.g., $10^{-5}$) for numerical stability.
This scaling adjusts the original logits according to the reward signal, influencing the likelihood of token selection. We employ the sigmoid function to normalize the reward into a stable range between 0 and 1. The square root is subsequently applied to prevent excessively small scaling values for negative rewards. 

Importantly, as shown in our theoretical analysis (Section \ref{sec:rank-reversal-sec5}), larger multiplicative scaling values increase the likelihood of rank reversals in token selection, promoting more non-autoregressive generation patterns. By linking this scaling factor to the current sequence's reward, our method adaptively encourages more rank reversals when the intermediate sequence shows promising quality (higher reward), and fewer when the quality is poor (lower reward). This adaptive approach ensures that the degree of non-autoregressive behavior is dynamically adjusted based on generation quality.

\textbf{Step 4: Guided Token Selection.}
Finally, the selection of $k_t$ tokens to unmask proceeds similarly to the standard Confidence-Based Sampling (Section~\ref{sec:background_sampling}), but critically, it uses the \textit{modified} logits $L'^{(t)}$ to compute confidence scores.
Typically, the confidence is derived from the softmax probabilities of the $\mathrm{argmax}$ predictions based on $L'^{(t)}$.
The $k_t$ masked positions with the highest confidence derived from the reward-adjusted logits $L'^{(t)}$ are selected.
The corresponding predicted tokens replace the mask tokens at these selected positions to form the next state $x^{(t-1)}$.
Note that while the rank order of confidence scores between each token position can change due to reward scaling, the argmax token predictions at each position remain the same regardless of reward scaling.

Repeating these steps for all diffusion steps, RWS leverages external rewards to produce more effective, non-autoregressive text generation.

\begin{figure*}[t]
\centering
\includegraphics[width=\textwidth, trim=0 0 0 0, clip]{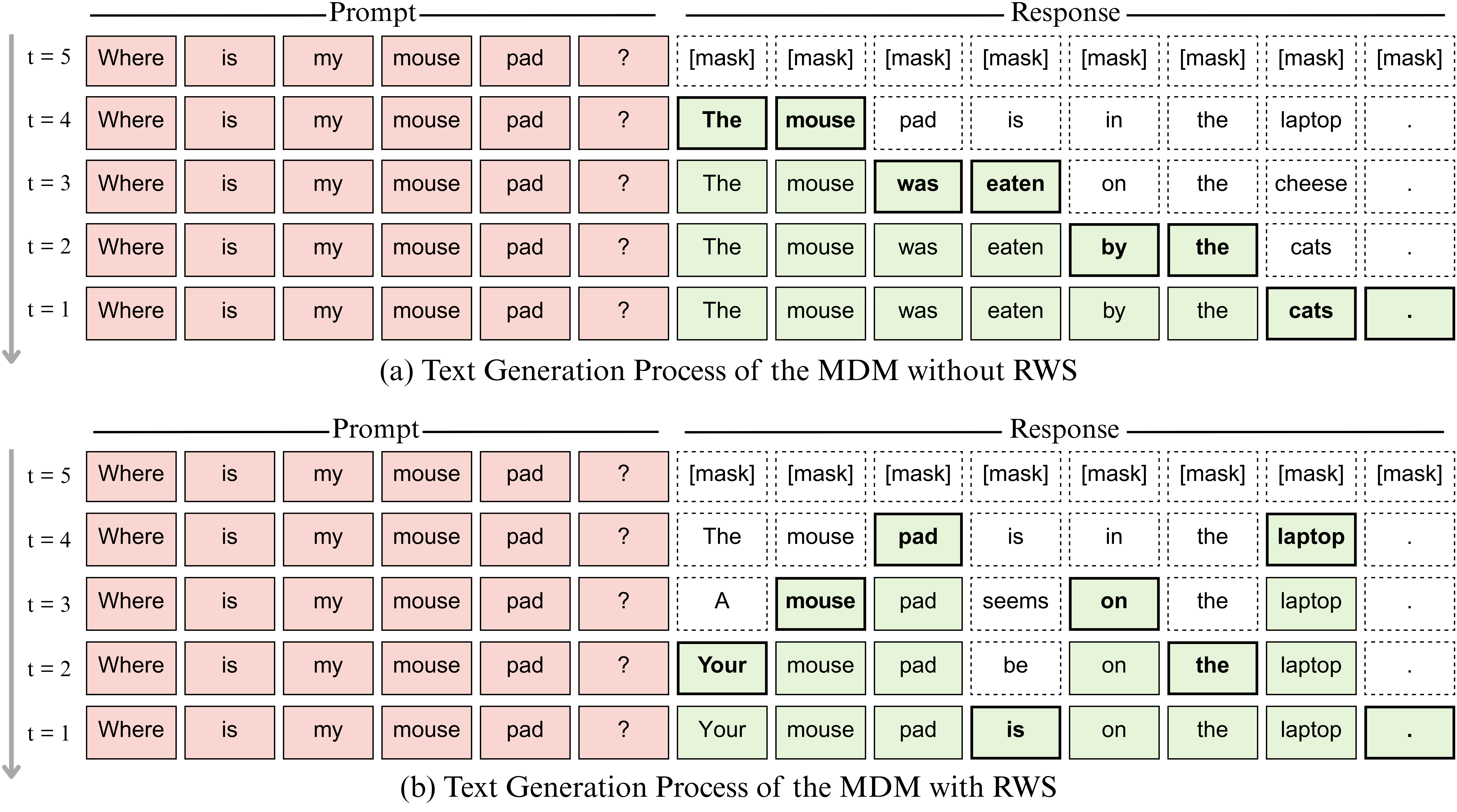}
\vspace{-6.5mm}
\caption{
Illustration of how RWS promotes non-autoregressive token selection. Decoding proceeds from $t=5$ (fully masked) down to $t=1$ (fully unmasked). (a) Default sampling unmasks tokens sequentially, prematurely fixing earlier tokens (\textit{"The mouse"}) and resulting in an incorrect, contextually mismatched output (\textit{"The mouse was eaten by the cats."}). (b) RWS enables flexible, non-sequential token selection, correctly identifying tokens based on global context, generating a coherent output (\textit{"Your mouse pad is on the laptop."}).
}
\vspace{-3mm}
\label{fig:example_fig}
\end{figure*}

\section{Theoretical Analysis}
\label{sec:rank-reversal-sec5}

This section provides a theoretical analysis illustrating how scaling logits multiplicatively can change which token is most likely.

\subsection{Setup and Notation}
\label{sec:rank-reversal-setup}

Let $a,b\in\mathbb R^{d}$ be two logit vectors obtained at some decoding step.\footnote{In practice, $a$ and $b$ could come from two different masked positions within the same MDM step (\S\ref{sec:background_sampling}).} For any vector $x \in \mathbb{R}^{d}$ and scaling factor $r>0$, define the scaled softmax distribution:
\[
q_i^{x}(r) = \frac{\exp(r x_i)}{\sum_{j=1}^{d}\exp(r x_j)}.
\]

Let $p_i^{x}=q_i^{x}(1)$ denote the original softmax probabilities, and write $P^{x}(r)=\max_i q_{i}^{x}(r)$. Throughout, we assume:
\begin{equation}
\max_i p_i^{a} \;>\; \max_i p_i^{b}.
\label{eq:rank-order-at-1}
\end{equation}

\subsection{Intuition: Why Scaling Can Change Token Selection}

Scaling logits by a factor $r$ uniformly shrinks or expands all logit gaps $\Delta_{k}=l_{\max}-l_{k}$:
\begin{itemize}[nosep,leftmargin=*]
\item \textbf{$r < 1$}: All gaps shrink, flattening the probability distribution. Here, the total sum of gaps determines which token's maximum probability decreases faster.
\item \textbf{$r > 1$}: All gaps expand, sharpening the distribution. In this case, the smallest gap (the closest competing token) determines which token's maximum probability approaches 1 more slowly.
\end{itemize}
Thus, a reversal in token selection occurs if (a) the currently highest-ranked token vector has a larger total gap sum or (b) a smaller nearest-competitor gap compared to the alternative. The theorem below formalizes this concept.

\subsection{Theoretical Conditions for Rank Reversal}
\label{sec:rank-reversal-theorem}

Write the gaps of $a$ and $b$ relative to their top indices $t_a,t_b$ as
\[
\Delta_k^{a} = a_{t_a}-a_{k},\quad
\Delta_k^{b} = b_{t_b}-b_{k}\quad (k\neq t_a,t_b).
\]
Define the total and minimal gaps
\begin{equation}
\label{eq:gap-stats}
\begin{aligned}
\Sigma_{a} &= \sum_{k\neq t_a}\Delta_k^{a}, \quad & \delta_{a} &= \min_{k\neq t_a}\Delta_k^{a}, \\
\Sigma_{b} &= \sum_{k\neq t_b}\Delta_k^{b}, \quad & \delta_{b} &= \min_{k\neq t_b}\Delta_k^{b}.
\end{aligned}
\end{equation}

\begin{figure*}[ht]
\centering
\begin{minipage}[c]{0.65\textwidth}
    \centering
    \renewcommand{\arraystretch}{1.0}
    \setlength{\tabcolsep}{3pt}
    \resizebox{0.9\textwidth}{!}{
    \begin{tabular}{l | c | c c c}
    \toprule
    Reward Model & GOD ($\uparrow$) & RWS Win (\%) & Draw (\%) & Default Win (\%) \\ 
    \midrule
    None (Default) & 1.426 & - & - & - \\
    \midrule
    Skywork & \textbf{2.607} & \textbf{60.2} & 3.8 & 36.0 \\
    LlamaOB & \textbf{2.621} & \textbf{60.9} & 4.4 & 34.7 \\
    Intern & \textbf{2.599} & \textbf{58.7} & 3.6 & 37.8 \\
    Eurus & \textbf{2.658} & \textbf{58.7} & 3.6 & 37.8 \\
    GRM & \textbf{2.621} & \textbf{55.8} & 4.0 & 40.2 \\
    QRM & \textbf{2.394} & \textbf{60.2} & 3.3 & 36.4 \\
    \bottomrule
    \end{tabular}}
    \captionof{table}{
    Generation Order Deviation (GOD) and win-rate comparison between Reward-Weighted Sampling (RWS) and the confidence-based baseline (Default) across different automated reward models. Higher GOD values indicate improved non-autoregressive behavior.
    }
    \vspace{-4mm}
    \label{tab:win_rate_comparison}
\end{minipage}
\hfill
\begin{minipage}[c]{0.32\textwidth}
    \centering
    \includegraphics[width=\textwidth]{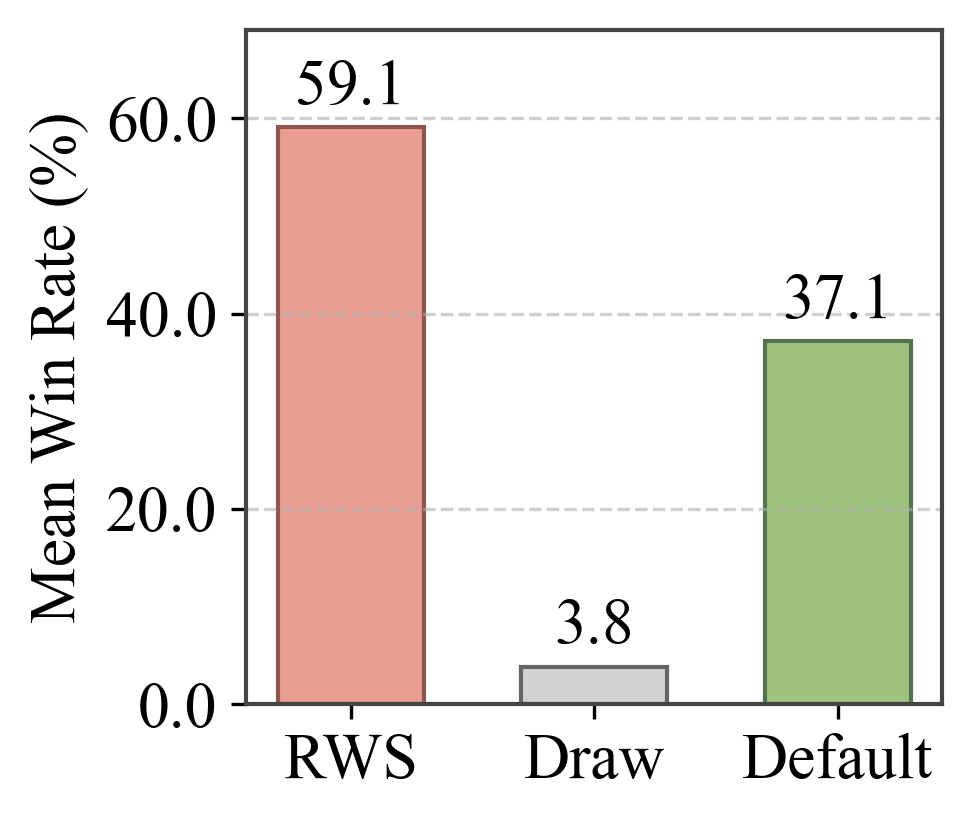}
    \vspace{-8mm}
    \caption{Average win rate comparison across all six reward models. RWS outperforms the Default method by a significant margin.}
    \label{fig:mean_win_rate}
    \vspace{-4mm}
\end{minipage}
\end{figure*}

\begin{theorem}[Rank Reversal Conditions]
\label{thm:sec5-reversal}
Under Eq.~\eqref{eq:rank-order-at-1}, exactly one of the following holds:
\begin{enumerate}[label=(\alph*),nosep,leftmargin=*]
\item\label{item:low-r-flip}
      If $\Sigma_{a}<\Sigma_{b}$, there exists a unique
      $r^{\star}\in(0,1)$ such that
      \(
        P^{a}(r^{\star})=P^{b}(r^{\star})
      \)
      and
      $P^{a}(r)<P^{b}(r)$ for all $0<r<r^{\star}$.
\item\label{item:high-r-flip}
      If $\delta_{a}>\delta_{b}$, there exists a unique
      $r^{\dagger}>1$ such that
      \(
        P^{a}(r^{\dagger})=P^{b}(r^{\dagger})
      \)
      and
      $P^{a}(r)<P^{b}(r)$ for all $r>r^{\dagger}$.
\item\label{item:no-flip}
      Otherwise $P^{a}(r)>P^{b}(r)$ for every $r>0$; no reversal occurs.
\end{enumerate}
\end{theorem}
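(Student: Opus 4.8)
The plan is to collapse the two-sided comparison into the sign of a single scalar function and then read the three cases off that function's behaviour at $r\to 0^{+}$, at $r=1$, and at $r\to\infty$. Because multiplying logits by $r>0$ preserves their order, the argmax indices $t_a,t_b$ do not depend on $r$, so I can write each maximal probability in closed form as
\[
P^{x}(r)=\frac{1}{1+S_{x}(r)},\qquad S_{x}(r):=\sum_{k\neq t_x}\exp\!\big(-r\,\Delta^{x}_{k}\big).
\]
Since $P^{x}$ is a strictly decreasing function of $S_{x}$, the comparison reduces to $P^{a}(r)\lessgtr P^{b}(r)\iff g(r)\gtrless 0$ where $g(r):=S_{b}(r)-S_{a}(r)$, a difference of two sums of decaying exponentials. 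In this language the hypothesis~\eqref{eq:rank-order-at-1} is exactly $g(1)>0$, and a rank reversal is a sign change of $g$.

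Next I would extract $\Sigma$ and $\delta$ from the two endpoints. Each $S_{x}$ has $d-1$ terms tending to $1$ as $r\to 0$, giving $g(0)=0$ and $g'(0)=\Sigma_{a}-\Sigma_{b}$; hence the sign of $g$ just to the right of $0$ equals the sign of $\Sigma_{a}-\Sigma_{b}$, which drives the low-$r$ regime. As $r\to\infty$ each $S_{x}(r)=m_{x}\exp(-r\,\delta_{x})\,(1+o(1))$ is dominated by its minimal-gap term, so the eventual sign of $g$ is dictated by the comparison of $\delta_{a}$ and $\delta_{b}$ (the vector with the smaller minimal gap retains more mass off its top token and therefore has the smaller maximal probability for large $r$). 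Combining each one-sided sign with $g(1)>0$, the Intermediate Value Theorem produces a zero in $(0,1)$ when $\Sigma_{a}<\Sigma_{b}$ and a zero in $(1,\infty)$ when $a$ carries the smaller minimal gap, together with the claimed one-sided sign of $P^{a}-P^{b}$ on the extreme sub-interval.

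The delicate word is \emph{unique}. Existence is just IVT, but a difference of exponential sums can in principle oscillate, so I must bound the number of sign changes of $g$ on each half-line. I would do this by passing to $\Phi(r):=\log S_{a}(r)-\log S_{b}(r)$, which has the same zeros as $g$ and satisfies $\Phi(0)=0$, $\Phi(1)<0$, and $\Phi'(r)=E_{b}(r)-E_{a}(r)$, where $E_{x}(r)=\big(\sum_{k}\Delta^{x}_{k}e^{-r\Delta^{x}_{k}}\big)/\big(\sum_{k}e^{-r\Delta^{x}_{k}}\big)$ is the Gibbs-weighted mean gap, decreasing from the arithmetic mean $\Sigma_{x}/(d-1)$ at $r=0$ to the minimum $\delta_{x}$ at $r=\infty$. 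If $E_{b}-E_{a}$ changes sign at most once, then $\Phi$ has at most one interior extremum, hence at most one zero beyond $r=0$, which upgrades each IVT crossing to a unique $r^{\star}$ or $r^{\dagger}$ with the stated monotone sign pattern; alternatively one can invoke the variation-diminishing property of the Chebyshev system $\{e^{-\lambda r}\}$ to cap the zero count directly.

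The final and, I expect, hardest ingredient is the claim that \emph{exactly one} alternative is active, i.e.\ that the hypothesis forbids a low-$r$ and a high-$r$ reversal from coexisting. This amounts to showing that $\Sigma_{a}<\Sigma_{b}$ together with a smaller minimal gap for $a$ is incompatible with $g(1)>0$: intuitively, if $a$'s gap multiset is dominated both in total and at its smallest entry, then convexity of $x\mapsto e^{-x}$ (a Jensen/majorization estimate applied termwise) should force $S_{a}(1)>S_{b}(1)$, contradicting~\eqref{eq:rank-order-at-1}; the same estimate would give $g>0$ throughout in the remaining case, yielding (c). I regard pinning down this convexity comparison, and verifying that it genuinely rules out the two regimes overlapping for \emph{all} gap configurations rather than just the two-competitor case, as the main obstacle of the proof.
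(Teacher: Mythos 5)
Your reduction and both existence arguments coincide with the paper's own proof in Appendix~\ref{app:detailed_proof}: writing $P^{x}(r)=1/(1+S_{x}(r))$ with $S_{x}(r)=\sum_{k\neq t_x}e^{-r\Delta_k^x}$ (the paper's $F_x$), reading the low-$r$ regime off the first-order expansion $S_a-S_b\approx r(\Sigma_b-\Sigma_a)$, reading the high-$r$ regime off minimal-gap domination, and invoking the intermediate value theorem against Eq.~\eqref{eq:rank-order-at-1} is exactly the paper's Steps 1--3. One discrepancy you should have flagged rather than silently absorbed: your (mathematically correct) analysis places the high-$r$ reversal at $\delta_a<\delta_b$, whereas the statement you were given says $\delta_a>\delta_b$. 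The paper's own proof, its intuition paragraph, and its worked example ($\Sigma_a=1.3>1.2=\Sigma_b$, $\delta_a=0.5<0.6=\delta_b$, flip at $r^{\dagger}\approx1.73$) all agree with you; indeed, under the printed statement that very example falls into case~(c) (``no reversal'') yet reverses, so the statement carries a sign error, and what you actually proved is the corrected claim, not the printed one.

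The genuine gap is in uniqueness and the ``exactly one'' clause, and it cannot be closed along the lines you sketch. Your first route is conditional on ``$E_b-E_a$ changes sign at most once'', which you never establish (and which is false in general); the Chebyshev-system route only bounds the number of zeros by the number of coefficient sign changes, which exceeds one as soon as the two gap multisets interleave; and the convexity/majorization incompatibility you hoped would give exclusivity is simply false. Concretely, take $d=4$ with gaps $\Delta^a=\{0.1,\,10,\,10\}$ and $\Delta^b=\{0.5,\,0.5,\,19.2\}$. Then $\Sigma_a=20.1<20.2=\Sigma_b$ and $\delta_a=0.1<0.5=\delta_b$, while $S_a(1)\approx0.905<1.213\approx S_b(1)$, so Eq.~\eqref{eq:rank-order-at-1} holds. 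Near $r=0$ we have $S_a-S_b\approx 0.1r>0$, hence $P^a<P^b$; at $r=1$, $P^a>P^b$; and for large $r$, $S_a\sim e^{-0.1r}$ dominates $S_b\sim 2e^{-0.5r}$, hence $P^a<P^b$ again. Both a low-$r$ and a high-$r$ reversal occur for the same pair, so the two regimes are not mutually exclusive and the final ingredient of your plan is unprovable --- your suspicion that the two-competitor intuition might not extend was exactly right ($d=4$ already breaks it). For what it is worth, the paper fares no better: its Step 4 rests on the claim that a difference of two strictly increasing functions can change sign at most once, which is false, so the uniqueness/exclusivity portion of the published proof is equally broken; only the existence statements in (a) and in the corrected (b) are actually established by either argument.
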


\begin{proof}
See Appendix~\ref{app:detailed_proof} for the detailed proof.
\end{proof}

\begin{theorem}[Reward-Monotonic Guidance]
\label{thm:reward-monotone-main}
Let \(p_\theta(x)\propto\exp L_\theta(x)\) be the base decoder and
\(p_r(x)\propto p_\theta(x)\,e^{rR(x)}\) its reward-scaled variant
with any \(r>0\).
Then
\begin{align}
\mathbb{E}_{p_r}[R] &\;\ge\; \mathbb{E}_{p_\theta}[R],
\label{eq:reward-monotone}\\[-2pt]
&\text{with strict inequality if }\operatorname{Var}_{p_r}[R]>0.\notag
\end{align}
\end{theorem}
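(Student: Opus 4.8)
The plan is to realize $p_\theta$ and $p_r$ as the two endpoints of a smooth one-parameter exponential family and to show that expected reward is monotone along this path. Concretely, for $s \in [0,r]$ I would define $p_s(x) \propto p_\theta(x)\, e^{s R(x)}$, so that $p_0 = p_\theta$ and $p_r$ is exactly the reward-scaled decoder in the statement. Let $Z(s) = \sum_x p_\theta(x)\, e^{s R(x)}$ be the normalizer and $\Lambda(s) = \log Z(s)$ the associated log-partition (cumulant-generating) function. Since the sequence space is finite, $Z(s)$ is a finite sum of strictly positive, smooth terms, so $\Lambda$ is smooth on all of $[0,r]$ and no convergence or differentiability issues arise.

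The first key step is to identify the first two derivatives of $\Lambda$ with the mean and variance of $R$ under $p_s$. Differentiating $\Lambda(s) = \log Z(s)$ once gives $\Lambda'(s) = Z'(s)/Z(s) = \mathbb{E}_{p_s}[R]$, and differentiating again gives $\Lambda''(s) = \mathbb{E}_{p_s}[R^2] - (\mathbb{E}_{p_s}[R])^2 = \operatorname{Var}_{p_s}[R]$. This is the standard fact that the log-partition function is convex, with its second derivative equal to the variance of the sufficient statistic; I would verify it by direct differentiation of the finite sum rather than invoking it as a black box.

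Given this, monotonicity is immediate: because $\Lambda''(s) = \operatorname{Var}_{p_s}[R] \ge 0$ for every $s$, the function $s \mapsto \mathbb{E}_{p_s}[R] = \Lambda'(s)$ is non-decreasing on $[0,r]$, and in particular $\mathbb{E}_{p_r}[R] \ge \mathbb{E}_{p_\theta}[R]$. Equivalently, integrating the derivative yields the clean identity $\mathbb{E}_{p_r}[R] - \mathbb{E}_{p_\theta}[R] = \int_0^r \operatorname{Var}_{p_s}[R]\, ds$, which makes the non-negativity transparent.

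The only point requiring real care — and the part I expect to be the main obstacle — is the strictness claim. The integral above is strictly positive precisely when $\operatorname{Var}_{p_s}[R] > 0$ on a set of positive measure, so I must upgrade the hypothesis $\operatorname{Var}_{p_r}[R] > 0$ (stated only at the endpoint $s=r$) to positivity across the whole interval. The crucial observation is that every $p_s$ shares the same support as $p_\theta$, since the tilt $e^{sR(x)}$ is strictly positive and never annihilates mass. Hence $\operatorname{Var}_{p_s}[R] = 0$ if and only if $R$ is constant on $\operatorname{supp}(p_\theta)$, a condition independent of $s$. Consequently the variance is either identically zero on $[0,r]$ or strictly positive everywhere on it; the assumption $\operatorname{Var}_{p_r}[R] > 0$ rules out the former, forcing the integrand to be strictly positive throughout and the integral — hence the inequality — to be strict.
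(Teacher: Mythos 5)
Your proposal is correct and follows essentially the same route as the paper: both identify $\mathbb{E}_{p_s}[R]$ and $\operatorname{Var}_{p_s}[R]$ with the first and second derivatives of the log-partition function $\Lambda(s)=\log\sum_x p_\theta(x)\,e^{sR(x)}$ and exploit its convexity. In fact, your write-up is more careful than the paper's own proof on the one point you flagged as delicate. The paper integrates $A''$ and then asserts the identity $\mathbb{E}_{p_r}[R]=\mathbb{E}_{p_\theta}[R]+r\,\operatorname{Var}_{p_r}[R]$, i.e.\ it replaces $\int_0^r A''(s)\,ds$ by $r\,A''(r)$; this is not a valid step in general (the mean value theorem only yields $r\,\operatorname{Var}_{p_\xi}[R]$ for some intermediate $\xi\in(0,r)$), so the paper's strictness claim is left resting on an incorrect identity. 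Your version states the correct identity $\mathbb{E}_{p_r}[R]-\mathbb{E}_{p_\theta}[R]=\int_0^r\operatorname{Var}_{p_s}[R]\,ds$, and your observation that every tilted distribution $p_s$ has the same support as $p_\theta$ --- so that vanishing variance is equivalent to $R$ being constant on that common support, a condition independent of $s$ --- is precisely the argument needed to upgrade positivity of the variance at the single endpoint $s=r$ to positivity of the integrand on all of $[0,r]$. This closes the strictness claim rigorously where the paper's shortcut does not.
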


\begin{proof}
The full proof is provided in Appendix~\ref{app:thm2-proof}.
\end{proof}

\begin{corollary}[Per-step Improvement in RWS]
\label{cor:step-improve}
At diffusion step $t$ of the reverse process, let $p_\theta(\cdot\mid x^{(t)})$ be the model distribution conditioned on the current mask and let $r^{(t)}>0$ be the scale produced from the (normalised) reward.
Sampling the token update $x^{(t)}_{\mathrm{RWS}}$ from $p_{r^{(t)}}$ guarantees
\[
\mathbb E\!\bigl[R(x^{(t)}_{\mathrm{RWS}})\bigr]
\;\ge\;
\mathbb E\!\bigl[R(x^{(t)})\bigr],
\]
with equality only if $R$ is constant on the support of $p_{r^{(t)}}$.
\end{corollary}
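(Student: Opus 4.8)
The plan is to treat this corollary as a single-step specialization of Theorem~\ref{thm:reward-monotone-main}, so that essentially no new computation is required. First I would freeze the current mask state $x^{(t)}$ and take the base decoder of Theorem~\ref{thm:reward-monotone-main} to be the conditional model distribution $p_\theta(\cdot\mid x^{(t)})$. This is legitimate because the mask predictor emits logits $L^{(t)}$ at step $t$, so the conditional law over token updates is exactly a softmax $\propto\exp L_\theta(\cdot\mid x^{(t)})$, matching the exponential-family hypothesis $p_\theta\propto\exp L_\theta$ of the theorem. I would then identify the RWS update law $p_{r^{(t)}}(\cdot)\propto p_\theta(\cdot\mid x^{(t)})\,e^{r^{(t)}R(\cdot)}$ with the reward-scaled variant $p_r$ of the theorem, the role of the scale $r$ being played by $r^{(t)}$.

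The single hypothesis to discharge is positivity of the scale. Here I would appeal to Step~3 of the algorithm: the effective multiplier $s_R\sqrt{\sigma(r^{(t)})+\epsilon}$ is strictly positive since $s_R>0$, the sigmoid lies in $(0,1)$, and $\epsilon>0$, so the scale fed to Theorem~\ref{thm:reward-monotone-main} satisfies $r>0$ and the theorem applies verbatim. Reading the right-hand side $\mathbb E[R(x^{(t)})]$ as the expected reward under the unscaled base update $p_\theta(\cdot\mid x^{(t)})$ (default confidence-based sampling), the theorem yields $\mathbb E_{p_{r^{(t)}}}[R]\ge\mathbb E_{p_\theta(\cdot\mid x^{(t)})}[R]$ directly. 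For the equality clause I would transcribe the strictness part of Theorem~\ref{thm:reward-monotone-main} in contrapositive form: strict inequality holds whenever $\operatorname{Var}_{p_{r^{(t)}}}[R]>0$, so equality can hold only if $\operatorname{Var}_{p_{r^{(t)}}}[R]=0$, i.e.\ $R$ is $p_{r^{(t)}}$-almost-surely constant. Because $e^{r^{(t)}R}>0$ everywhere, $p_{r^{(t)}}$ and $p_\theta(\cdot\mid x^{(t)})$ share a common support, so the phrase ``constant on the support of $p_{r^{(t)}}$'' is unambiguous and matches the stated condition.

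The main obstacle is interpretive rather than computational. I must make explicit that the corollary models the per-step RWS update as a draw from the reward-tilted law $p_{r^{(t)}}\propto p_\theta\,e^{r^{(t)}R}$, and reconcile this idealized form with the multiplicative logit scaling $L'^{(t)}=L^{(t)}\cdot s_R\sqrt{\sigma(r^{(t)})+\epsilon}$ actually used in Section~\ref{sec:method_algorithm}. Once that identification is fixed and the positivity of the scale is noted, both the inequality and its equality condition follow immediately from Theorem~\ref{thm:reward-monotone-main}.
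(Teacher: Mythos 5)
Your proposal is correct and follows essentially the same route as the paper's own proof: both fix the current mask state $x^{(t)}$, identify $p_\theta(\cdot\mid x^{(t)})$ and $p_{r^{(t)}}(\cdot\mid x^{(t)})$ with the base and reward-scaled distributions of Theorem~\ref{thm:reward-monotone-main}, and read off the inequality and its equality condition directly. Your additional checks --- positivity of the effective scale from Step~3 of the algorithm, the variance-based phrasing of the equality clause, and the remark on reconciling the tilted law with the multiplicative logit scaling actually implemented --- are sound refinements but not a different argument, since the corollary's hypotheses ($r^{(t)}>0$ and sampling from $p_{r^{(t)}}$) already grant what the paper implicitly assumes.
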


\begin{proof}
Given the current masked sequence \( x^{(t)} \), the distributions \( p_\theta(\cdot \mid x^{(t)}) \) and \( p_{r^{(t)}}(\cdot \mid x^{(t)}) \) correspond exactly to the setting of Theorem~\ref{thm:reward-monotone-main}. Thus, directly applying Theorem~\ref{thm:reward-monotone-main}, we obtain:
\[
\mathbb{E}_{p_{r^{(t)}}(\cdot \mid x^{(t)})}[R(x^{(t)}_{\mathrm{RWS}})] 
\ge 
\mathbb{E}_{p_{\theta}(\cdot \mid x^{(t)})}[R(x^{(t)})],
\]
with strict inequality if \( R \) is not constant on the support of \( p_{r^{(t)}}(\cdot \mid x^{(t)}) \).  
Since this inequality holds for any arbitrary masked sequence \( x^{(t)} \), it directly implies the stated per-step improvement of the Reward-Weighted Sampling approach.
\end{proof}

\vspace{0.4em}
\noindent
\textbf{Interpretation.}
Theorems~\ref{thm:sec5-reversal} and~\ref{thm:reward-monotone-main} explain why Reward-Weighted Sampling (RWS) improves the decoding process. Specifically, they show that scaling logits based on a reward model consistently increases the expected reward of the generated tokens. Additionally, these theorems identify clear conditions under which tokens with initially lower confidence become preferred, effectively promoting a non-autoregressive generation pattern. 

Figure~\ref{fig:example_fig} provides a concrete illustration of this theoretical insight: under standard sampling, the diffusion model follows a locally autoregressive pattern, resulting in a contextually incorrect generation. In contrast, RWS induces beneficial token selection rank reversals, producing a coherent and globally consistent output. In short, RWS systematically selects better tokens by combining local confidence with global reward signals.

\noindent
\textbf{Comparison with Temperature Scaling.} 
We also provide a detailed theoretical and empirical comparison between RWS and the widely used softmax temperature scaling method in Appendix~\ref{app:temp_scaling_results}.

\section{Experiments}
\label{sec:experiments}

\subsection{Experimental Setup}
\label{sec:exp_setup}

We use LLaDA-8B-Instruct as our base diffusion LLM across all experiments. We evaluate our method using six publicly available reward models from RewardBench~\citep{lambert2024rewardbench}, summarized in Appendix~\ref{app:reward_models}. We conduct experiments on three distinct benchmarks: RewardBench~\citep{lambert2024rewardbench}, MT-Bench~\citep{zheng2023judging}, and a keyword-constrained generation task designed to test non-autoregressive capabilities.

We evaluate model performance using our proposed metric, \textit{Generation Order Deviation (GOD)}, which quantifies how far the actual token generation order deviates from a strictly left-to-right (autoregressive) decoding sequence. Formally, GOD is defined as:
\[
\text{GOD} = \frac{1}{n}\sum_{i=1}^{n}|a_i - o_i|,
\]
where \( n \) is the total number of tokens in the generated sequence, \( o_i \) is the position at which the \( i \)-th token was actually decoded by the diffusion model, and \( a_i \) is the position at which the \( i \)-th token would have been decoded under strictly autoregressive (left-to-right) decoding. Higher GOD indicates stronger non-autoregressive behavior; for instance, a GOD of 2.1 implies tokens are decoded, on average, 2.1 positions away from the standard autoregressive order.

Additional standard metrics such as \textit{Win Rate} and \textit{Perplexity} are also employed. Further details on reward models, datasets, and general implementation details can be found in Appendix~\ref{app:exp_details}.

\subsection{Reward-Weighted vs. Standard Sampling}
\label{sec:exp_guidance}

We first evaluate the effectiveness of RWS compared to the default confidence-based sampling method across different reward models. As shown in Table~\ref{tab:win_rate_comparison} and Figure~\ref{fig:mean_win_rate}, RWS consistently outperforms the default method across all six reward models, with win rates ranging from 55.8\% to 60.9\%.

Notably, the consistent improvement across multiple reward models with different architectures and training objectives demonstrates the robustness of our approach. The average win rate of approximately 59\% represents a non-trivial improvement over baseline methods (37.1\%), particularly considering that these evaluations were conducted on the RewardBench dataset, which was specifically designed to assess how well reward models align with human preferences.

The results indicate that incorporating global reward signals during the diffusion process effectively guides the model toward higher-quality generations. This improvement can be attributed to RWS's ability to deviate from the sequential left-to-right generation pattern inherent in standard confidence-based sampling, instead promoting a more flexible generation order that better utilizes the non-autoregressive capabilities of MDMs. This is quantitatively supported by the GOD metric as demonstrated in Table~\ref{tab:win_rate_comparison}, where RWS achieves significantly higher values compared to the default method, indicating a greater deviation from strict left-to-right generation.

\begin{table}[t]
\centering
\renewcommand{\arraystretch}{1.2}
\setlength{\tabcolsep}{4pt}
\small
\resizebox{\columnwidth}{!}{
\begin{tabular}{l c c}
\toprule
Reward Model & RWS Win (\%) & Default Win (\%) \\
\midrule
Skywork & \textbf{65.0} & 35.0 \\
LlamaOB & \textbf{62.5} & 37.5 \\
Intern & \textbf{60.6} & 39.4 \\
Eurus & \textbf{61.9} & 38.1 \\
GRM & \textbf{63.1} & 36.9 \\
QRM & \textbf{60.0} & 40.0 \\
\bottomrule
\end{tabular}}
\vspace{-2mm}
\caption{LLM judge-based win rates comparing RWS against the default method, averaged over two runs with reversed answer ordering to control for position bias.}
\label{tab:llm_judge_comparison}
\vspace{-4mm}
\end{table}

\subsection{Performance on Multi-Turn Response Tasks}
\label{sec:multi-turn}

To evaluate the effectiveness of RWS in more complex conversational settings, we conduct experiments on multi-turn dialogue prompts from MT-Bench. For this experiment, we employ LLM-as-a-judge evaluation methodology, following \citet{zheng2023judging} and \citet{lin2023llm}. Further details on the exact evaluation prompt and setup are provided in Appendix~\ref{app:llm_judge_setup}.

Table~\ref{tab:llm_judge_comparison} presents the results of this evaluation, comparing RWS against the default confidence-based sampling method. To control for potential position bias in the judging process~\citep{zheng2023judging}, we conduct evaluations in two configurations, one with RWS outputs presented as choice 1 and another with RWS outputs as choice 2, and reported the averaged win rates.

The results show that RWS consistently outperforms the default method across all reward models, with win rates ranging from 60.0\% to 65.0\%. This consistent advantage regardless of response order confirms that the improvements are genuine and not artifacts of evaluation bias.
The full results for each ordering configuration are provided in Appendix~\ref{app:full_llm_judge}.

We further provide a qualitative example illustrating how RWS improves coherence in multi-turn settings. A representative example comparing RWS and Default Sampling across a two-turn dialogue is provided in Appendix~\ref{app:multi-turn-example}. While Default Sampling generates repetitive and vague responses, RWS consistently produces coherent and contextually rich outputs, clearly demonstrating enhanced cross-turn reasoning and coherence.

The multi-turn setting is particularly challenging as it requires maintaining coherence across longer contexts and adhering to conversational flow—aspects that benefit from non-autoregressive generation's ability to consider global context. RWS's strong performance in this setting demonstrates its effectiveness in leveraging bidirectional information for complex reasoning and contextual understanding.

\begin{table}[t!]
\centering
\small
\resizebox{\columnwidth}{!}{%
\renewcommand{\arraystretch}{1.0}
\begin{tabular}{p{0.48\columnwidth} | p{0.46\columnwidth}}
\toprule
\textbf{Prompt} & \textbf{Anchor Keywords} \\ 
\midrule
Create a brief analysis of climate change using all anchor keywords. Make sure the text reads naturally and maintains logical flow. & \textit{global temperatures, carbon emissions, international agreements, renewable technologies} \\ 
\midrule
Write a short business strategy description incorporating all anchor keywords. The text should be coherent and professional. & \textit{market analysis, competitive advantage, revenue streams, strategic partnerships} \\ 
\bottomrule
\end{tabular}
}
\vspace{-1mm}
\caption{Examples from keyword-constrained generation dataset. Each input provided to the model consists of a prompt along with specific anchor keywords that must be incorporated within the generated response.}
\label{tab:keyword-task-examples}
\vspace{-3mm}
\end{table}

\subsection{Performance on Keyword-Constraint Generation Tasks}
\label{sec:keyword-constraint}

Our third experiment evaluates model performance on a keyword-constrained generation task, specifically designed to test non-autoregressive capabilities. Inspired by the CommonGen dataset~\citep{lin2020comgen}, which requires models to generate coherent sentences incorporating a given set of concepts, we constructed our dataset to further increase task difficulty by requiring the incorporation of predetermined anchor keywords into longer, contextually coherent responses. This setup explicitly challenges the model's ability to maintain global coherence and natural fluency. Details of dataset creation, including keyword selection criteria and validation processes, are provided in Appendix \ref{app:keyword-dataset}. Table \ref{tab:keyword-task-examples} shows representative examples from our keyword-constrained dataset.

For evaluation, we measure keyword inclusion rate, text perplexity, and GOD. While keyword inclusion directly evaluates the model’s ability to satisfy task constraints, perplexity is used as a widely-accepted metric to quantitatively assess fluency and coherence in generated text. GOD further quantifies the non-autoregressive characteristics of the generation.

While all methods achieved similar keyword inclusion rates (3.6–3.7 keywords out of 4), significant differences emerged in perplexity scores and GOD, as shown in Figure~\ref{fig:keyword-perplexity}.
RWS consistently achieves lower perplexity across all reward model configurations, with improvements ranging from 12\% to 16\% compared to the default method. Lower perplexity indicates more fluent and natural text, suggesting that RWS produces more coherent outputs while still satisfying the keyword constraints.

Additionally, RWS achieves significantly higher GOD values (on average more than double the GOD values of the default method), clearly demonstrating that our method successfully promotes non-autoregressive token selection patterns, effectively leveraging the diffusion model’s bidirectional context to improve overall generation quality.

\begin{figure}[t]
    \centering
    \includegraphics[width=\linewidth]{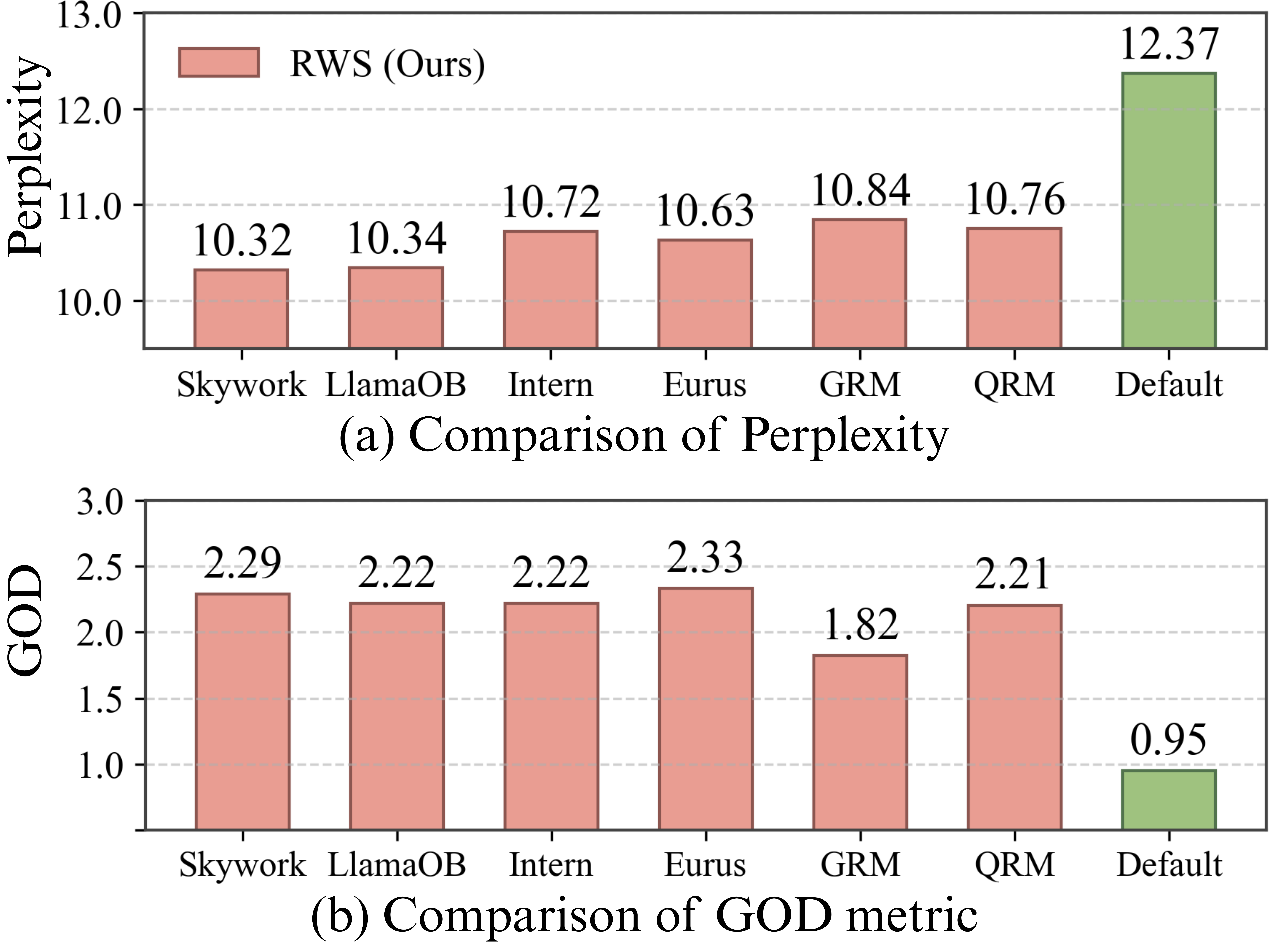}
    \vspace{-4mm}
    \caption{Perplexity (top) and GOD (bottom) results on keyword-constrained generation tasks. Lower perplexity indicates better fluency, while higher GOD reflects stronger non-autoregressive behavior.}
    \label{fig:keyword-perplexity}
    \vspace{-3mm}
\end{figure}


\subsection{Performance on Code Generation}
\label{sec:code-generation}

To further evaluate the generalizability of RWS, we conducted experiments on HumanEval~\citep{chen2021codex}, a widely-used benchmark for code generation that requires structured reasoning and syntactic correctness. 
This evaluation is particularly relevant given recent work highlighting the promise of diffusion LLMs in code generation tasks~\citep{Khanna2025MercuryUL}.

\begin{table*}[t]
\centering
\begin{minipage}[t]{0.28\textwidth}
\centering
\small
\caption{Pass@1 scores on HumanEval benchmark.}
\label{tab:humaneval_results}
\vspace{-3.75mm}
\begin{tabular}{lc}
\toprule
\textbf{Method} & \textbf{Pass@1 (\%)} \\
\midrule
Default & 17.68 \\
\midrule
RWS (Skywork) & 21.95 \\
RWS (LlamaOB) & \textbf{24.39} \\
RWS (Intern) & 20.73 \\
RWS (Eurus) & 20.73 \\
RWS (GRM) & 19.51 \\
RWS (QRM) & 21.95 \\
\bottomrule
\end{tabular}
\end{minipage}
\hfill
\begin{minipage}[t]{0.69\textwidth}
\centering
\small
\caption{Win rates of RWS vs. Default for Table~\ref{tab:win_rate_comparison} across GOD quintiles. The result demonstrates strong positive correlation between non-autoregressive behavior and performance.}
\label{tab:god_correlation}
\resizebox{\textwidth}{!}{%
\begin{tabular}{l|cccccc}
\toprule
\textbf{GOD Quintile} & \textbf{Eurus} & \textbf{GRM} & \textbf{Intern} & \textbf{LlamaOB} & \textbf{QRM} & \textbf{Skywork} \\
\midrule
0-20\% & 36.7 & 46.2 & 33.0 & 43.3 & 41.1 & 46.7 \\
20-40\% & 48.9 & 56.2 & 49.4 & 56.0 & 57.8 & 53.3 \\
40-60\% & 52.7 & 58.9 & 55.6 & 59.6 & 61.1 & 57.8 \\
60-80\% & 59.6 & 61.1 & 62.6 & 68.9 & 66.7 & 64.4 \\
80-100\% & \textbf{70.0} & \textbf{57.8} & \textbf{70.8} & \textbf{70.0} & \textbf{67.8} & \textbf{72.2} \\
\bottomrule
\end{tabular}%
}
\end{minipage}
\end{table*}

Table~\ref{tab:humaneval_results} presents the Pass@1 scores across different reward models. 
RWS consistently improves performance over the baseline, with the best configuration achieving a \textbf{+6.71\%} absolute improvement.
This substantial gain demonstrates that RWS's ability to leverage global context through non-autoregressive generation is particularly beneficial for structured generation tasks requiring long-range dependencies and logical consistency.

\section{Analysis}
\label{sec:analysis}

\subsection{Attribution of Performance Gains}
\label{sec:attribution}

A key question is whether the performance improvements from RWS stem from the non-autoregressive generation order itself or merely from the reward model's guidance. 
To investigate this, we analyzed the correlation between GOD and win rates by dividing samples into quintiles based on their GOD scores.

As shown in Table~\ref{tab:god_correlation}, five out of six reward models exhibit a clear monotonic increase in win rate as generation becomes more non-autoregressive. 
This strong correlation demonstrates that performance gains are systematically linked to the emergence of non-autoregressive generation patterns.
Importantly, the reward model does not select individual tokens but rather provides a single scalar that uniformly scales all logits, meaning the argmax token at each position remains unchanged.
The reward's influence is realized precisely through enabling more flexible, non-autoregressive decoding paths that better utilize the MDM's bidirectional context.

\subsection{Diversity Analysis}
\label{sec:diversity}

One potential concern is whether RWS might reduce generation diversity by overly prioritizing immediate rewards. 
To investigate this, we measured lexical diversity using Distinct-1 and Distinct-2 metrics~\citep{li-etal-2016-diversity}, where higher values indicate greater diversity.

\begin{table}[h]
\centering
\small
\caption{Diversity metrics comparing RWS with Default sampling. Higher values indicate greater lexical diversity, demonstrating that RWS enhances generation variety while maintaining coherence.}
\label{tab:diversity}
\vspace{-3mm}
\begin{tabular}{lcc}
\toprule
\textbf{Method} & \textbf{Distinct-1 ($\uparrow$)} & \textbf{Distinct-2 ($\uparrow$)} \\
\midrule
Default & 0.1885 & 0.571 \\
\midrule
RWS (Skywork) & 0.195 & 0.583 \\
RWS (LlamaOB) & \textbf{0.199} & \textbf{0.599} \\
RWS (Intern) & 0.194 & 0.579 \\
RWS (Eurus) & 0.193 & 0.580 \\
RWS (GRM) & 0.196 & 0.585 \\
RWS (QRM) & 0.198 & 0.594 \\
\bottomrule
\vspace{-5mm}
\end{tabular}
\end{table}

Contrary to the concern, Table~\ref{tab:diversity} shows that RWS  increases diversity across all configurations.
We hypothesize this improvement arises because RWS evaluates global sequence quality at each step, reducing local biases and enabling more varied generation paths while maintaining coherence.

\subsection{Computational Considerations}
\label{sec:computational}
While RWS demonstrates consistent performance improvements, it introduces computational overhead from reward model inference at each diffusion step. 
Our detailed analysis reveals that this overhead can be flexibly managed through frequency ablation, allowing practitioners to balance performance gains with computational constraints.
For comprehensive analysis, we refer readers to Appendix~\ref{app:comp_overhead} and~\ref{app:efficiency}.


\section{Conclusion}
\label{sec:conclusion}
We introduced Reward-Weighted Sampling (RWS), a novel decoding approach that leverages external reward signals to effectively exploit the non-autoregressive nature of MDMs. Our theoretical analysis demonstrated how reward-based logit scaling can cause beneficial probability rank reversals, fundamentally altering generation order. Empirical evaluations across diverse benchmarks showed that RWS consistently outperforms standard decoding methods, highlighting its ability to enhance generation quality and coherence. Future research may explore adaptive reward scaling, computational efficiency improvements, and extensions to multimodal generation contexts. We hope our work inspires further exploration into leveraging reward guidance to fully unlock the potential of diffusion-based language models.

\section*{Limitations}
Although Reward-Weighted Sampling (RWS) consistently demonstrates improvements in non-autoregressive generation quality, it introduces additional computational overhead compared to standard confidence-based sampling, as quantitatively analyzed in Appendix~\ref{app:comp_overhead}. Specifically, inference time increases by approximately 21–33\%, which could impact scalability or real-time applications. Additionally, the performance of RWS inherently depends on the quality and characteristics of the external reward models used. In cases where reward models exhibit biases or misalignment with human preferences, the effectiveness and reliability of RWS could be compromised. Future research should aim to reduce computational overhead and explore robust techniques to select and calibrate reward models effectively.

\section*{Ethical Considerations}
Our proposed Reward-Weighted Sampling (RWS) relies on external reward models to guide generation, which could inadvertently propagate biases or subtle misalignments present in those models. Therefore, careful selection and evaluation of reward models are recommended when applying RWS in practice. Future studies might also explore practical methods for monitoring and reducing potential biases in reward-guided text generation systems. For example, simple debiasing measures—such as auditing reward models for group fairness or applying lightweight post-decoding filters—could be incorporated to further mitigate unintended bias \citep{bu2025adaptive,allam2024biasdpo}.

\section*{Acknowledgments}
This work was supported by Institute for Information \& communications Technology Planning \& Evaluation(IITP) grant funded by the Korea government(MSIT) (RS-2019-II190075, Artificial Intelligence Graduate School Program(KAIST), and RS-2025-02304967, AI Star Fellowship(KAIST)).

\bibliography{main}

\begin{thebibliography}{36}
\providecommand{\natexlab}[1]{#1}

\bibitem[{Allam(2024)}]{allam2024biasdpo}
Ahmed Allam. 2024.
\newblock {B}ias{DPO}: Mitigating bias in language models through direct preference optimization.
\newblock In \emph{Proceedings of the 62nd Annual Meeting of the Association for Computational Linguistics (Volume 4: Student Research Workshop)}. Association for Computational Linguistics.

\bibitem[{Austin et~al.(2021)Austin, Johnson, Ho, Tarlow, and van~den Berg}]{Jacob2021SDDMiDS}
Jacob Austin, Daniel~D. Johnson, Jonathan Ho, Daniel Tarlow, and Rianne van~den Berg. 2021.
\newblock Structured denoising diffusion models in discrete state-spaces.
\newblock \emph{CoRR}.

\bibitem[{Bengio et~al.(2015)Bengio, Vinyals, Jaitly, and Shazeer}]{bengio2015sampling}
Samy Bengio, Oriol Vinyals, Navdeep Jaitly, and Noam Shazeer. 2015.
\newblock Scheduled sampling for sequence prediction with recurrent neural networks.
\newblock In \emph{Advances in Neural Information Processing Systems 28: Annual Conference on Neural Information Processing Systems 2015, December 7-12, 2015, Montreal, Quebec, Canada}.

\bibitem[{Brown et~al.(2020)Brown, Mann, Ryder, Subbiah, Kaplan, Dhariwal, Neelakantan, Shyam, Sastry, Askell et~al.}]{brown2020language}
Tom Brown, Benjamin Mann, Nick Ryder, Melanie Subbiah, Jared~D Kaplan, Prafulla Dhariwal, Arvind Neelakantan, Pranav Shyam, Girish Sastry, Amanda Askell, and 1 others. 2020.
\newblock Language models are few-shot learners.
\newblock \emph{Advances in neural information processing systems}, 33:1877--1901.

\bibitem[{Bu et~al.(2025)Bu, Huo, Jing, and Yang}]{bu2025adaptive}
Yuyan Bu, Liangyu Huo, Yi~Jing, and Qing Yang. 2025.
\newblock Beyond excess and deficiency: Adaptive length bias mitigation in reward models for rlhf.
\newblock In \emph{Findings of NAACL}.

\bibitem[{Cai et~al.(2024)Cai, Cao, Chen, Chen, Chen, Chen, Chen, Chen, Chen, Chu, Dong, Duan, Fan, Fei, Gao, Ge, Gu, Gu, Gui, Guo, Guo, He, Hu, Huang, Jiang, Jiao, Jin, Lei, Li, Li, Li, Li, Li, Li, Liu, Liu, Hong, Liu, Liu, Liu, Lv, Lv, Lv, Ma, Ma, Ma, Ning, Ouyang, Qiu, Qu, Shang, Shao, Song, Song, Sui, Sun, Sun, Tang, Wang, Wang, Wang, Wang, Wang, Wang, Wang, Wei, Weng, Wu, Xiong, Xu, Xu, Yan, Yan, Yang, Ye, Ying, Yu, Yu, Zang, Zhang, Zhang, Zhang, Zhang, Zhang, Zhang, Zhang, Zhang, Zhang, Zhang, Zhang, Zhao, Zhao, Zhao, Zhou, Zhou, Zhuo, Zou, Qiu, Qiao, and Lin}]{cai2024internlm2}
Zheng Cai, Maosong Cao, Haojiong Chen, Kai Chen, Keyu Chen, Xin Chen, Xun Chen, Zehui Chen, Zhi Chen, Pei Chu, Xiaoyi Dong, Haodong Duan, Qi~Fan, Zhaoye Fei, Yang Gao, Jiaye Ge, Chenya Gu, Yuzhe Gu, Tao Gui, and 81 others. 2024.
\newblock Internlm2 technical report.

\bibitem[{Chen et~al.(2021)Chen, Tworek, Jun, Yuan, de~Oliveira~Pinto, Kaplan, Edwards, Burda, Joseph, Brockman, Ray, Puri, Krueger, Petrov, Khlaaf, Sastry, Mishkin, Chan, Gray, Ryder, Pavlov, Power, Kaiser, Bavarian, Winter, Tillet, Such, Cummings, Plappert, Chantzis, Barnes, Herbert-Voss, Guss, Nichol, Paino, Tezak, Tang, Babuschkin, Balaji, Jain, Saunders, Hesse, Carr, Leike, Achiam, Misra, Morikawa, Radford, Knight, Brundage, Murati, Mayer, Welinder, McGrew, Amodei, McCandlish, Sutskever, and Zaremba}]{chen2021codex}
Mark Chen, Jerry Tworek, Heewoo Jun, Qiming Yuan, Henrique~Ponde de~Oliveira~Pinto, Jared Kaplan, Harri Edwards, Yuri Burda, Nicholas Joseph, Greg Brockman, Alex Ray, Raul Puri, Gretchen Krueger, Michael Petrov, Heidy Khlaaf, Girish Sastry, Pamela Mishkin, Brooke Chan, Scott Gray, and 39 others. 2021.
\newblock \href {https://arxiv.org/abs/2107.03374} {Evaluating large language models trained on code}.

\bibitem[{Dathathri et~al.(2020)Dathathri, Madotto, Lan, Hung, Frank, Molino, Yosinski, and Liu}]{Dathathri2020PaPLMASAtCTG}
Sumanth Dathathri, Andrea Madotto, Janice Lan, Jane Hung, Eric Frank, Piero Molino, Jason Yosinski, and Rosanne Liu. 2020.
\newblock Plug and play language models: A simple approach to controlled text generation.
\newblock In \emph{International Conference on Learning Representations}.

\bibitem[{Dhariwal and Nichol(2021)}]{Dhariwal2021AiNIPS}
Prafulla Dhariwal and Alexander Nichol. 2021.
\newblock Diffusion models beat gans on image synthesis.
\newblock In \emph{Advances in Neural Information Processing Systems}.

\bibitem[{Dorka(2024)}]{dorka2024quantile}
Nicolai Dorka. 2024.
\newblock Quantile regression for distributional reward models in rlhf.
\newblock \emph{arXiv preprint}.

\bibitem[{Ghazvininejad et~al.(2019)Ghazvininejad, Levy, Liu, and Zettlemoyer}]{ghazvin2019mask}
Marjan Ghazvininejad, Omer Levy, Yinhan Liu, and Luke Zettlemoyer. 2019.
\newblock Mask-predict: Parallel decoding of conditional masked language models.
\newblock In \emph{Proceedings of the 2019 Conference on Empirical Methods in Natural Language Processing and the 9th International Joint Conference on Natural Language Processing (EMNLP-IJCNLP)}.

\bibitem[{Gu et~al.(2019)Gu, Wang, and Zhao}]{Gu2019LT}
Jiatao Gu, Changhan Wang, and Junbo Zhao. 2019.
\newblock Levenshtein transformer.
\newblock In \emph{Advances in Neural Information Processing Systems}.

\bibitem[{Hinton et~al.(2015)Hinton, Vinyals, and Dean}]{hinton2015distill}
Geoffrey~E. Hinton, Oriol Vinyals, and Jeffrey Dean. 2015.
\newblock Distilling the knowledge in a neural network.
\newblock \emph{CoRR}.

\bibitem[{Ho et~al.(2020)Ho, Jain, and Abbeel}]{Ho2020DDPM}
Jonathan Ho, Ajay Jain, and Pieter Abbeel. 2020.
\newblock Denoising diffusion probabilistic models.
\newblock In \emph{Advances in Neural Information Processing Systems}.

\bibitem[{Holtzman et~al.(2020)Holtzman, Buys, Du, Forbes, and Choi}]{holtzman2020nuc}
Ari Holtzman, Jan Buys, Li~Du, Maxwell Forbes, and Yejin Choi. 2020.
\newblock \href {https://openreview.net/forum?id=rygGQyrFvH} {The curious case of neural text degeneration}.
\newblock In \emph{International Conference on Learning Representations}.

\bibitem[{Khanna et~al.(2025)Khanna, Kharbanda, Li, Varma, Wang, Birnbaum, Luo, Miraoui, Palrecha, Ermon, Grover, and Kuleshov}]{Khanna2025MercuryUL}
Samar Khanna, Siddhant Kharbanda, Shufan Li, Harshit Varma, Eric Wang, Sawyer Birnbaum, Ziyang Luo, Yanis Miraoui, Akash Palrecha, Stefano Ermon, Aditya Grover, and Volodymyr Kuleshov. 2025.
\newblock \href {https://api.semanticscholar.org/CorpusID:280000358} {Mercury: Ultra-fast language models based on diffusion}.
\newblock \emph{ArXiv}, abs/2506.17298.

\bibitem[{Lambert et~al.(2024)Lambert, Pyatkin, Morrison, Miranda, Lin, Chandu, Dziri, Kumar, Zick, Choi, Smith, and Hajishirzi}]{lambert2024rewardbench}
Nathan Lambert, Valentina Pyatkin, Jacob Morrison, LJ~Miranda, Bill~Yuchen Lin, Khyathi Chandu, Nouha Dziri, Sachin Kumar, Tom Zick, Yejin Choi, Noah~A. Smith, and Hannaneh Hajishirzi. 2024.
\newblock Rewardbench: Evaluating reward models for language modeling.

\bibitem[{Lewis et~al.(2020)Lewis, Liu, Goyal, Ghazvininejad, Mohamed, Levy, Stoyanov, and Zettlemoyer}]{lewis2020bart}
Mike Lewis, Yinhan Liu, Naman Goyal, Marjan Ghazvininejad, Abdelrahman Mohamed, Omer Levy, Veselin Stoyanov, and Luke Zettlemoyer. 2020.
\newblock {BART}: Denoising sequence-to-sequence pre-training for natural language generation, translation, and comprehension.
\newblock In \emph{Proceedings of the 58th Annual Meeting of the Association for Computational Linguistics}.

\bibitem[{Li et~al.(2016)Li, Galley, Brockett, Gao, and Dolan}]{li-etal-2016-diversity}
Jiwei Li, Michel Galley, Chris Brockett, Jianfeng Gao, and Bill Dolan. 2016.
\newblock \href {https://aclanthology.org/N16-1014/} {A diversity-promoting objective function for neural conversation models}.
\newblock In \emph{Proceedings of the 2016 Conference of the North {A}merican Chapter of the Association for Computational Linguistics: Human Language Technologies}. Association for Computational Linguistics.

\bibitem[{Li et~al.(2022)Li, Thickstun, Gulrajani, Liang, and Hashimoto}]{Xiang2022DICTG}
Xiang~Lisa Li, John Thickstun, Ishaan Gulrajani, Percy Liang, and Tatsunori Hashimoto. 2022.
\newblock Diffusion-{LM} improves controllable text generation.
\newblock In \emph{Advances in Neural Information Processing Systems}.

\bibitem[{Lin et~al.(2020)Lin, Zhou, Shen, Zhou, Bhagavatula, Choi, and Ren}]{lin2020comgen}
Bill~Yuchen Lin, Wangchunshu Zhou, Ming Shen, Pei Zhou, Chandra Bhagavatula, Yejin Choi, and Xiang Ren. 2020.
\newblock Commongen: A constrained text generation challenge for generative commonsense reasoning.
\newblock \emph{Findings of EMNLP}.

\bibitem[{Lin and Chen(2023)}]{lin2023llm}
Yen-Ting Lin and Yun-Nung Chen. 2023.
\newblock {LLM}-eval: Unified multi-dimensional automatic evaluation for open-domain conversations with large language models.
\newblock In \emph{Proceedings of the 5th Workshop on NLP for Conversational AI (NLP4ConvAI 2023)}. Association for Computational Linguistics.

\bibitem[{Liu et~al.(2024)Liu, Zeng, Liu, Yan, He, Wang, Yan, Liu, and Zhou}]{liu2024skywork}
Chris~Yuhao Liu, Liang Zeng, Jiacai Liu, Rui Yan, Jujie He, Chaojie Wang, Shuicheng Yan, Yang Liu, and Yahui Zhou. 2024.
\newblock Skywork-reward: Bag of tricks for reward modeling in llms.
\newblock \emph{arXiv preprint}.

\bibitem[{Lou et~al.(2024)Lou, Meng, and Ermon}]{Lou2024DDMbEtRotDD}
Aaron Lou, Chenlin Meng, and Stefano Ermon. 2024.
\newblock Discrete diffusion modeling by estimating the ratios of the data distribution.
\newblock In \emph{Proceedings of the 41st International Conference on Machine Learning}.

\bibitem[{Nie et~al.(2025)Nie, Zhu, You, Zhang, Ou, Hu, Zhou, Lin, Wen, and Li}]{Nie2025LLDM}
Shen Nie, Fengqi Zhu, Zebin You, Xiaolu Zhang, Jingyang Ou, Jun Hu, Jun Zhou, Yankai Lin, Ji-Rong Wen, and Chongxuan Li. 2025.
\newblock Large language diffusion models.
\newblock \emph{arXiv preprint arXiv:2502.09992}.

\bibitem[{Ou et~al.(2025)Ou, Nie, Xue, Zhu, Sun, Li, and Li}]{Jingyang2025YADDSMtCDoCD}
Jingyang Ou, Shen Nie, Kaiwen Xue, Fengqi Zhu, Jiacheng Sun, Zhenguo Li, and Chongxuan Li. 2025.
\newblock Your absorbing discrete diffusion secretly models the conditional distributions of clean data.
\newblock In \emph{The Thirteenth International Conference on Learning Representations}.

\bibitem[{Ouyang et~al.(2022)Ouyang, Wu, Jiang, Almeida, Wainwright, Mishkin, Zhang, Agarwal, Slama, Ray, Schulman, Hilton, Kelton, Miller, Simens, Askell, Welinder, Christiano, Leike, and Lowe}]{Ouyang2022Tlmtfiwhf}
Long Ouyang, Jeff Wu, Xu~Jiang, Diogo Almeida, Carroll~L. Wainwright, Pamela Mishkin, Chong Zhang, Sandhini Agarwal, Katarina Slama, Alex Ray, John Schulman, Jacob Hilton, Fraser Kelton, Luke Miller, Maddie Simens, Amanda Askell, Peter Welinder, Paul Christiano, Jan Leike, and Ryan Lowe. 2022.
\newblock Training language models to follow instructions with human feedback.
\newblock In \emph{Proceedings of the 36th International Conference on Neural Information Processing Systems}.

\bibitem[{Park et~al.(2024)Park, Jwa, Ren, Kim, and Choi}]{park2024offsetbias}
Junsoo Park, Seungyeon Jwa, Meiying Ren, Daeyoung Kim, and Sanghyuk Choi. 2024.
\newblock Offsetbias: Leveraging debiased data for tuning evaluators.

\bibitem[{Rafailov et~al.(2023)Rafailov, Sharma, Mitchell, Manning, Ermon, and Finn}]{Rafeal2023DPOYLMiSaRM}
Rafael Rafailov, Archit Sharma, Eric Mitchell, Christopher~D Manning, Stefano Ermon, and Chelsea Finn. 2023.
\newblock Direct preference optimization: Your language model is secretly a reward model.
\newblock In \emph{Thirty-seventh Conference on Neural Information Processing Systems}.

\bibitem[{Ranzato et~al.(2016)Ranzato, Chopra, Auli, and Zaremba}]{renzato2016seq}
Marc'Aurelio Ranzato, Sumit Chopra, Michael Auli, and Wojciech Zaremba. 2016.
\newblock Sequence level training with recurrent neural networks.
\newblock In \emph{4th International Conference on Learning Representations, {ICLR} 2016}.

\bibitem[{Wang and Cho(2019)}]{alex2019bert}
Alex Wang and Kyunghyun Cho. 2019.
\newblock {BERT} has a mouth, and it must speak: {BERT} as a markov random field language model.

\bibitem[{Yang and Klein(2021)}]{Yang2021FCTGwFD}
Kevin Yang and Dan Klein. 2021.
\newblock {FUDGE}: Controlled text generation with future discriminators.
\newblock In \emph{Proceedings of the 2021 Conference of the North American Chapter of the Association for Computational Linguistics: Human Language Technologies}.

\bibitem[{Yang et~al.(2024)Yang, Ding, Lin, Zhang, and Zhang}]{yang2024regularizing}
Rui Yang, Ruomeng Ding, Yong Lin, Huan Zhang, and Tong Zhang. 2024.
\newblock Regularizing hidden states enables learning generalizable reward model for llms.
\newblock In \emph{Advances in Neural Information Processing Systems}.

\bibitem[{Yuan et~al.(2024)Yuan, Cui, Wang, Ding, Wang, Deng, Shan, Chen, Xie, Lin, Liu, Zhou, Peng, Liu, and Sun}]{yuan2024advancing}
Lifan Yuan, Ganqu Cui, Hanbin Wang, Ning Ding, Xingyao Wang, Jia Deng, Boji Shan, Huimin Chen, Ruobing Xie, Yankai Lin, Zhenghao Liu, Bowen Zhou, Hao Peng, Zhiyuan Liu, and Maosong Sun. 2024.
\newblock Advancing llm reasoning generalists with preference trees.

\bibitem[{Zheng et~al.(2023)Zheng, Chiang, Sheng, Zhuang, Wu, Zhuang, Lin, Li, Li, Xing, Zhang, Gonzalez, and Stoica}]{zheng2023judging}
Lianmin Zheng, Wei-Lin Chiang, Ying Sheng, Siyuan Zhuang, Zhanghao Wu, Yonghao Zhuang, Zi~Lin, Zhuohan Li, Dacheng Li, Eric Xing, Hao Zhang, Joseph~E. Gonzalez, and Ion Stoica. 2023.
\newblock Judging {LLM}-as-a-judge with {MT}-bench and chatbot arena.
\newblock In \emph{Thirty-seventh Conference on Neural Information Processing Systems Datasets and Benchmarks Track}.

\bibitem[{Zhu et~al.(2023)Zhu, Frick, Wu, Zhu, and Jiao}]{starling2023}
Banghua Zhu, Evan Frick, Tianhao Wu, Hanlin Zhu, and Jiantao Jiao. 2023.
\newblock Starling-7b: Improving llm helpfulness \& harmlessness with rlaif.

\end{thebibliography}

\newpage
\appendix

\section{Masked Diffusion Model Framework}
\label{app:mdm}

Masked Diffusion Models (MDMs) represent a distinct approach to generative language modeling compared to Autoregressive Models (ARMs). Unlike ARMs, MDMs operate on the entire sequence in a non-autoregressive manner, modeling the data distribution $p(x_0)$ through a forward masking process and a learned reverse unmasking process~\citep{Nie2025LLDM,Jingyang2025YADDSMtCDoCD}.

The forward process, denoted $q(x_t | x_{t-1})$, gradually corrupts an original clean sequence $x_0 = (x_0^1, \dots, x_0^L)$ of length $L$ over a discrete time horizon $t \in {0, 1, \dots, T}$. Specifically, at step $t$, each token $x_0^j$ in the original sequence is independently replaced by a special mask token, $M$, with probability $t/T$, or remains unchanged with probability $1 - t/T$. This transition probability is defined as:
\[
q(x_t^j | x_0^j) =
\begin{cases}
\frac{t}{T}, & \text{if } x_t^j = M \\
1-\frac{t}{T}, & \text{if } x_t^j = x_0^j
\end{cases}
\]
As $t$ increases from $0$ to $T$, the sequence $x_t$ becomes progressively more masked, until $x_T$ consists entirely of mask tokens $M$.

The reverse process aims to reverse this corruption, generating the clean sequence $x_0$ starting from the fully masked sequence $x_T$. This is achieved by learning a parameterized model, often called a mask predictor $p_{\theta}$, which approximates the conditional probability $p(x_0 | x_t)$. Typically, $p_{\theta}$ is implemented using a non-causal Transformer architecture~\citep{Nie2025LLDM}. 
This model takes the masked sequence $x_t$ at any time $t$ as input and predicts the original tokens $x_0$ for all masked positions.

The mask predictor $p_{\theta}$ is trained by optimizing an objective derived from the diffusion framework. During training, a time $t$ is sampled uniformly from ${1, ..., T}$, the corresponding masked sequence $x_t$ is generated via the forward process, and the model $p_{\theta}$ is trained to predict the original tokens $x_0$. The training loss is typically a cross-entropy loss computed only on the masked positions $j$ where $x_t^j = M$.

\section{Related Work}
\label{app:related_work}

\quad \textbf{Diffusion Language Models} Diffusion models, initially developed for continuous domains like image generation \citep{Ho2020DDPM, Dhariwal2021AiNIPS}, have recently extended successfully to text. Early diffusion language models \citep{Xiang2022DICTG} introduced continuous embedding-based diffusion processes, enabling controllable generation. Subsequently, discrete diffusion approaches \citep{Jacob2021SDDMiDS, Lou2024DDMbEtRotDD} emerged, significantly improving text quality. Recent large-scale models like LLaDA \citep{Nie2025LLDM} demonstrated competitive performance against autoregressive models, underscoring the potential of diffusion-based architectures for efficient and flexible text generation.

\textbf{Non-Autoregressive Generation Methods} Non-autoregressive (NAR) generation methods, designed to overcome sequential decoding bottlenecks, gained popularity with the introduction of parallel generation techniques like Mask-Predict \citep{ghazvin2019mask}. Iterative refinement approaches, such as the Levenshtein Transformer \citep{Gu2019LT}, further enhanced quality and flexibility by iteratively inserting and deleting tokens. These methods established a strong foundation for leveraging iterative masked-decoding strategies, paving the way for diffusion models to exploit non-autoregressive properties effectively.

\textbf{Reward-Weighted Decoding Strategies} Recent works integrate external reward signals directly into the decoding process to improve text alignment and quality. Plug-and-play decoding \citep{Dathathri2020PaPLMASAtCTG, Yang2021FCTGwFD} utilizes auxiliary classifiers to steer autoregressive generation toward desired attributes but faces issues with coherent shifts and error accumulation. Alternatively, reinforcement learning from human feedback \citep{Ouyang2022Tlmtfiwhf} optimizes model parameters using preference-based rewards, though computationally expensive. Direct Preference Optimization \citep{Rafeal2023DPOYLMiSaRM} achieves similar goals without RL, fine-tuning models directly with supervised preference signals. Our Reward-Weighted Sampling method extends these ideas to masked diffusion models, incorporating external reward feedback iteratively during decoding process.

\begin{table}[t]
\small
\centering
\caption{Statistics of reward models used for normalization.}
\label{tab:reward_stats}
\begin{tabular}{lcc}
\toprule
\textbf{Reward Model} & \textbf{Mean} & \textbf{Std} \\
\midrule
Skywork & -4.95 & 11.18 \\
LlamaOB & -3.57 & 2.69 \\
Intern & 0.95 & 1.32 \\
Eurus & 954.75 & 1697.11 \\
GRM & -2.99 & 3.84 \\
QRM & 0.78 & 0.14 \\
\bottomrule
\end{tabular}
\end{table}

\section{Reward Normalization Statistics}
\label{app:reward_stats}

Table~\ref{tab:reward_stats} provides the mean ($\mu_R$) and standard deviation ($\sigma_R$) values used for normalizing raw reward scores obtained from each reward model. These statistics were computed from the Nectar validation set~\cite{starling2023} and are essential for standardizing reward scales across diverse reward models.

\section{Detailed Proof}
\label{app:detailed_proof}

\subsection{Illustrative Example}
\label{app:rank-reversal-example}

Now, we analyze a concrete example to illustrate the theorem. Consider the vectors:
\begin{align*}
a &= [1.1, 0.6, 0.3]\\
b &= [1.0, 0.4, 0.4]
\end{align*}

First, we compute the gap statistics:
\begin{align*}
\Sigma_a &= 1.3, \quad \delta_a = 0.5\\
\Sigma_b &= 1.2, \quad \delta_b = 0.6
\end{align*}

Since $\Sigma_a > \Sigma_b$, no flip is expected for $r < 1$. However, because $\delta_a < \delta_b$, a flip is expected for $r > 1$.

We verify this at specific values:

For $r=1$:
\begin{align*}
P^a(1) &\approx 0.486, \quad P^b(1) \approx 0.477
\end{align*}
Thus, $P^a(1) > P^b(1)$ as expected.

For $r=2$:
\begin{align*}
P^a(2) &\approx 0.835, \quad P^b(2) \approx 0.847
\end{align*}
Here, $P^a(2) < P^b(2)$, confirming a flip for $r>1$.

Solving $P^a(r) = P^b(r)$ numerically gives $r^\dagger \approx 1.73$.

We'll now prove the theorem step by step with detailed explanations.

\subsection{Step 1: Alternative Formulation}

First, we'll transform the problem to make it more tractable. Since shifting all logits by a constant doesn't change the softmax probabilities, we can assume without loss of generality that $a_{t_a} = b_{t_b} = 0$ (i.e., the highest logit in each vector is 0).

Now, let's define:
\begin{align}
F_a(r) &= \sum_{k \neq t_a} e^{-r\Delta_k^a}\\
F_b(r) &= \sum_{k \neq t_b} e^{-r\Delta_k^b}\\
g(r) &= F_a(r) - F_b(r)
\end{align}

With this notation, the maximum probabilities can be written as:
\begin{align}
P^a(r) &= \frac{1}{1 + F_a(r)}\\
P^b(r) &= \frac{1}{1 + F_b(r)}
\end{align}

Observe that $P^a(r) > P^b(r)$ if and only if $F_a(r) < F_b(r)$, which is equivalent to $g(r) < 0$. So:
\begin{align}
\textrm{sign}(P^a(r) - P^b(r)) = -\textrm{sign}(g(r))
\end{align}

This gives us a simpler function $g(r)$ whose sign tells us which vector has the higher maximum probability.

\subsection{Step 2: Behavior for Small $r$}

We want to understand how $g(r)$ behaves as $r$ approaches 0. For small values of $r$, we can use the Taylor series expansion of the exponential function:
\begin{align}
e^{-r\Delta} \approx 1 - r\Delta + \frac{1}{2}r^2\Delta^2 + O(r^3)
\end{align}

Applying this to our functions $F_a(r)$ and $F_b(r)$:
\begin{align}
F_a(r) &\approx \sum_{k \neq t_a} \left(1 - r\Delta_k^a\right.\nonumber\\
&\quad\left.+ \frac{1}{2}r^2(\Delta_k^a)^2 + O(r^3)\right)\nonumber\\
&= (d-1) - r\sum_{k \neq t_a}\Delta_k^a\nonumber\\
&\quad+ \frac{1}{2}r^2\sum_{k \neq t_a}(\Delta_k^a)^2 + O(r^3)\nonumber\\
&= (d-1) - r\Sigma_a + O(r^2)
\end{align}

Similarly:
\begin{align}
F_b(r) &\approx (d-1) - r\Sigma_b + O(r^2)
\end{align}

Therefore:
\begin{align}
g(r) &= F_a(r) - F_b(r) \nonumber \\
&\approx \left((d-1) - r\Sigma_a\right) \nonumber \\
&\quad - \left((d-1) - r\Sigma_b\right) + O(r^2) \nonumber \\
&= r(\Sigma_b - \Sigma_a) + O(r^2)
\end{align}

This tells us that for small positive $r$:
\begin{align}
\textrm{sign}(g(r)) \approx \textrm{sign}(\Sigma_b - \Sigma_a)
\end{align}

So if $\Sigma_a < \Sigma_b$, then $g(r) > 0$ for small $r$, which means $P^a(r) < P^b(r)$. But we know that at $r=1$, we have $P^a(1) > P^b(1)$ (our initial assumption), which means $g(1) < 0$.

Since $g(r)$ is continuous (as it's a difference of sums of continuous functions), if $g(r) > 0$ for small $r$ and $g(1) < 0$, then by the Intermediate Value Theorem, there must exist some $r^* \in (0,1)$ where $g(r^*) = 0$, which means $P^a(r^*) = P^b(r^*)$.

Moreover, since $g(r) > 0$ for small positive $r$, we have $P^a(r) < P^b(r)$ for all $0 < r < r^*$.

\subsection{Step 3: Behavior for Large $r$}

For large values of $r$, the exponential terms with the smallest exponents will dominate the sums. Let's analyze this behavior:

\begin{align}
F_a(r) &= \sum_{k \neq t_a} e^{-r\Delta_k^a}\\
&= e^{-r\delta_a} \cdot \left(1 + \sum_{k \neq t_a, \Delta_k^a > \delta_a} e^{-r(\Delta_k^a - \delta_a)}\right)
\end{align}

As $r \to \infty$, all the terms in the sum approach 0, except for the terms where $\Delta_k^a = \delta_a$. Let's say there are $C_a$ such terms. Then:

\begin{align}
F_a(r) &\sim C_a \cdot e^{-r\delta_a} \quad \textrm{as } r \to \infty
\end{align}

Similarly:
\begin{align}
F_b(r) &\sim C_b \cdot e^{-r\delta_b} \quad \textrm{as } r \to \infty
\end{align}

Therefore:
\begin{align}
g(r) &= F_a(r) - F_b(r)\\
&\sim C_a \cdot e^{-r\delta_a} - C_b \cdot e^{-r\delta_b} \quad \textrm{as } r \to \infty
\end{align}

If $\delta_a > \delta_b$, then as $r \to \infty$, the term $C_b \cdot e^{-r\delta_b}$ dominates, and:
\begin{align}
g(r) &\sim -C_b \cdot e^{-r\delta_b} < 0 \quad \textrm{as } r \to \infty
\end{align}

This means that for sufficiently large $r$, we have $g(r) < 0$, which implies $P^a(r) > P^b(r)$.

But if $\delta_a < \delta_b$, then as $r \to \infty$, the term $C_a \cdot e^{-r\delta_a}$ dominates, and:
\begin{align}
g(r) &\sim C_a \cdot e^{-r\delta_a} > 0 \quad \textrm{as } r \to \infty
\end{align}

So if $\delta_a < \delta_b$, then for sufficiently large $r$, we have $g(r) > 0$, which implies $P^a(r) < P^b(r)$.

Now, we know that at $r=1$, we have $P^a(1) > P^b(1)$ (our initial assumption), which means $g(1) < 0$.

If $\delta_a > \delta_b$, then for sufficiently large $r$, we have $g(r) < 0$, so $P^a(r) > P^b(r)$. There's no flip for large $r$.

But if $\delta_a < \delta_b$, then for sufficiently large $r$, we have $g(r) > 0$, so $P^a(r) < P^b(r)$. Since $g(1) < 0$ and $g(r) > 0$ for large $r$, by the Intermediate Value Theorem, there must exist some $r^\dagger > 1$ where $g(r^\dagger) = 0$, which means $P^a(r^\dagger) = P^b(r^\dagger)$.

\subsection{Step 4: Uniqueness and Exclusivity of the Cases}

We need to show that the scenarios described in the theorem are mutually exclusive and that the crossing points $r^*$ and $r^\dagger$ are unique.

First, note that $P^a(r)$ and $P^b(r)$ are both strictly monotonic functions of $r$. Specifically, as $r$ increases, the maximum probabilities also increase (the distributions become more concentrated on the highest logits).

Since $P^a(r)$ and $P^b(r)$ are both strictly increasing functions of $r$, their difference $P^a(r) - P^b(r)$ can change sign at most once as $r$ increases from 0 to $\infty$. This means that the function $g(r)$ can also change sign at most once.

So if $g(r)$ changes from positive to negative as $r$ increases (which happens when $\Sigma_a < \Sigma_b$), then it cannot change back to positive. Similarly, if $g(r)$ changes from negative to positive (which happens when $\delta_a < \delta_b$), then it cannot change back to negative.

This means that the two scenarios (low-$r$ flip and high-$r$ flip) are mutually exclusive.

Moreover, if neither gap condition holds (i.e., if $\Sigma_a \geq \Sigma_b$ and $\delta_a \geq \delta_b$), then $g(r)$ cannot change sign. Since $g(1) < 0$ (our initial assumption), this means $g(r) < 0$ for all $r > 0$, which implies $P^a(r) > P^b(r)$ for all $r > 0$. This is the "no flip" scenario.

\section{Proof of Theorem \ref{thm:reward-monotone-main}}
\label{app:thm2-proof}

Here we restate and provide a detailed proof of Theorem~\ref{thm:reward-monotone-main} from the main text.

\begin{theorem}[Reward-Monotonic Guidance]
Let \(p_\theta(x)\propto\exp L_\theta(x)\) be the base decoder and
\(p_r(x)\propto p_\theta(x)\,e^{rR(x)}\) its reward-scaled variant
with any \(r>0\).
Then
\begin{align}
\mathbb{E}_{p_r}[R] &\;\ge\; \mathbb{E}_{p_\theta}[R],
\end{align}
with strict inequality if \(\operatorname{Var}_{p_r}[R]>0\).
\end{theorem}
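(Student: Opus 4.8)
The plan is to embed the two distributions into a single one-parameter exponential family and reduce the claim to the convexity of a cumulant generating function. Concretely, for $s \ge 0$ define $p_s(x) \propto p_\theta(x)\,e^{sR(x)}$, so that $p_0 = p_\theta$ and $p_r$ is exactly the reward-scaled decoder of interest. It then suffices to prove that $s \mapsto \mathbb{E}_{p_s}[R]$ is non-decreasing on $[0,r]$, since evaluating at the endpoints immediately gives $\mathbb{E}_{p_r}[R] \ge \mathbb{E}_{p_0}[R] = \mathbb{E}_{p_\theta}[R]$.

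First I would introduce the log-partition function $\psi(s) = \log \sum_x p_\theta(x)\,e^{sR(x)} = \log \mathbb{E}_{p_\theta}\!\bigl[e^{sR}\bigr]$ and compute its first two derivatives. A direct differentiation, using $\tfrac{d}{ds} p_s(x) = p_s(x)\bigl(R(x) - \mathbb{E}_{p_s}[R]\bigr)$, yields the standard cumulant identities
\[
\psi'(s) = \mathbb{E}_{p_s}[R], \qquad \psi''(s) = \operatorname{Var}_{p_s}[R].
\]
Because a variance is always non-negative, $\psi''(s) \ge 0$, so $\psi'$ is non-decreasing; hence $\mathbb{E}_{p_r}[R] = \psi'(r) \ge \psi'(0) = \mathbb{E}_{p_\theta}[R]$, which is the weak inequality. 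Equivalently, the fundamental theorem of calculus gives the clean representation $\mathbb{E}_{p_r}[R] - \mathbb{E}_{p_\theta}[R] = \int_0^r \operatorname{Var}_{p_s}[R]\,ds \ge 0$. A one-line alternative is the covariance identity $\mathbb{E}_{p_r}[R] - \mathbb{E}_{p_\theta}[R] = \operatorname{Cov}_{p_\theta}\!\bigl(R, e^{rR}\bigr)/\mathbb{E}_{p_\theta}[e^{rR}]$, whose numerator is non-negative since $R$ and $e^{rR}$ are both non-decreasing functions of $R$ for $r>0$, by the correlation (Chebyshev sum) inequality.

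For the strict inequality I would argue through the support. Since $e^{sR(x)} > 0$, the support of $p_s$ coincides with that of $p_\theta$ for every $s$; therefore $R$ is constant on this common support for one value of $s$ if and only if it is constant for all of them, i.e.\ $\operatorname{Var}_{p_s}[R] = 0$ either holds for every $s \in [0,r]$ or for none. Assuming $\operatorname{Var}_{p_r}[R] > 0$ thus forces $\operatorname{Var}_{p_s}[R] > 0$ throughout $[0,r]$, making the integral $\int_0^r \operatorname{Var}_{p_s}[R]\,ds$ strictly positive and giving $\mathbb{E}_{p_r}[R] > \mathbb{E}_{p_\theta}[R]$.

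The only genuine technical care needed is to guarantee that the normalizer $\sum_x p_\theta(x) e^{sR(x)}$ is finite and that differentiation under the sum is valid; on the finite vocabulary and fixed generation length of MDM decoding the support is finite, so both are immediate and the substantive content is entirely the sign of $\psi''$. I do not expect a serious obstacle here, as the result is a finite-dimensional instance of the monotonicity of exponential tilting; the main thing to get right is the equivalence ``$\operatorname{Var}_{p_r}[R]>0$ iff $R$ is non-constant on the shared support,'' which cleanly links the theorem's hypothesis to the strict case.
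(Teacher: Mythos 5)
Your proof is correct and follows essentially the same route as the paper's: both identify $\mathbb{E}_{p_s}[R]$ and $\operatorname{Var}_{p_s}[R]$ as the first two derivatives of the log-partition function $\psi(s)=\log\sum_x p_\theta(x)\,e^{sR(x)}$ and deduce the weak inequality from its convexity. Where you genuinely depart from the paper, and improve on it, is the strict-inequality case. After integrating $A''$, the paper asserts
\[
\mathbb{E}_{p_r}[R]=\mathbb{E}_{p_\theta}[R]+r\,\operatorname{Var}_{p_r}[R],
\]
which is not a valid identity: $\int_0^r \operatorname{Var}_{p_s}[R]\,ds$ need not equal $r$ times the endpoint variance, since the variance changes with $s$. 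Your write-up keeps the correct integral representation $\mathbb{E}_{p_r}[R]-\mathbb{E}_{p_\theta}[R]=\int_0^r\operatorname{Var}_{p_s}[R]\,ds$ and then supplies exactly the link the paper's strict claim actually needs: because $e^{sR(x)}>0$, every tilted distribution $p_s$ has the same support as $p_\theta$, so $\operatorname{Var}_{p_s}[R]=0$ for one $s$ iff $R$ is constant on that common support iff it vanishes for all $s$; hence $\operatorname{Var}_{p_r}[R]>0$ forces the (continuous) integrand to be positive throughout $[0,r]$, making the integral strictly positive. So your argument is not merely equivalent to the paper's but repairs its flawed step; your covariance/Chebyshev-sum remark is a further correct alternative for the weak inequality that the paper does not mention.
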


\begin{proof}
Define the log-partition function as:
\[
A(r)=\log Z_r=\log\sum_x e^{L_\theta(x)+rR(x)}.
\]

Using standard exponential-family calculus, we have:
\[
A'(r)=\sum_x p_r(x)\,R(x)=\mathbb E_{p_r}[R],
\]
and the second derivative is:
\[
A''(r)=\operatorname{Var}_{p_r}[R]\ge0.
\]

Since \(A''(r)\ge0\), the function \(A'(r)\) is non-decreasing with respect to \(r\). Integrating \(A''\) from \(0\) to \(r\), we obtain:
\begin{align*}
A'(r)&=A'(0)+\int_{0}^{r}\!A''(s)\,ds \\
&=\mathbb E_{p_\theta}[R]+r\,\operatorname{Var}_{p_r}[R].
\end{align*}

This directly implies the desired result:
\[
\mathbb E_{p_r}[R]=\mathbb E_{p_\theta}[R]+r\,\operatorname{Var}_{p_r}[R]\ge\mathbb E_{p_\theta}[R],
\]
with strict inequality whenever \(\operatorname{Var}_{p_r}[R]>0\).
\end{proof}

\section{Comparison with Softmax Temperature Scaling}
\label{app:temp_scaling_results}

Reward-Weighted Sampling (RWS) leverages global feedback from an external reward model to adaptively scale logits at each diffusion step. A related but distinct technique is softmax temperature scaling, which uniformly scales logits using a fixed temperature parameter across the entire decoding process. Here, we highlight both theoretical and empirical differences between RWS and fixed temperature scaling.

The primary theoretical distinction is adaptivity. Temperature scaling applies a constant scaling factor, irrespective of intermediate sequence quality or global coherence signals, potentially causing overly aggressive token selections or negligible impact. In contrast, RWS dynamically adjusts scaling based on global reward evaluations, ensuring that logits are adjusted meaningfully according to generation quality.

Empirically, we conducted a comparative experiment using the keyword-constrained generation task (Section~\ref{sec:keyword-constraint}) with the same hyperparameter search space $\{0.01, 0.1, 1, 2, 4, 8, 16, 32\}$ for temperature scaling. Perplexity scores obtained were consistently higher (worse) than those achieved by RWS across all reward models. Specifically, temperature scaling resulted in perplexities ranging from 11.03 to 14.76, all worse than the worst-performing RWS configuration (10.84). Interestingly, Generation Order Deviation (GOD) values for temperature scaling ranged from 0.79 to 2.31, overlapping substantially with RWS's performance range. This demonstrates that while temperature scaling can similarly induce non-autoregressive behavior, it does so at the significant expense of generation quality.

In summary, adaptive reward-based scaling employed by RWS provides substantial qualitative improvements over simple, fixed temperature scaling, demonstrating the critical importance of incorporating meaningful global reward signals into the decoding process (see Appendix~\ref{app:temp_scaling_results} for all detailed results).

\section{Full Results of Softmax Temperature Scaling}
\label{app:temp_scaling_results}

We provide detailed results comparing different softmax temperature scaling values on the keyword-constrained generation task (Section~\ref{sec:keyword-constraint}). Table~\ref{tab:temp_scaling_results} summarizes perplexity and Generation Order Deviation (GOD) values across all tested temperature hyperparameters.

\begin{table}[h]
\centering
\small
\renewcommand{\arraystretch}{1.1}
\setlength{\tabcolsep}{8pt}
\begin{tabular}{c c c}
\toprule
\textbf{Temperature} & \textbf{Perplexity ($\downarrow$)} & \textbf{GOD ($\uparrow$)} \\
\midrule
0.01 & 13.57 & 0.91 \\
0.1 & 13.52 & 0.95 \\
1.0 & 14.76 & 0.80 \\
2.0 & 11.72 & 1.56 \\
4.0 & 11.13 & 2.04 \\
8.0 & 11.13 & 2.26 \\
16.0 & 11.03 & \textbf{2.31} \\
32.0 & 11.58 & 1.88 \\
\bottomrule
\end{tabular}
\caption{Perplexity and Generation Order Deviation (GOD) of softmax temperature scaling on the keyword-constrained generation task. While higher temperature scaling values increase non-autoregressive characteristics (higher GOD), they consistently degrade fluency (higher perplexity) compared to Reward-Weighted Sampling (RWS).}
\label{tab:temp_scaling_results}
\end{table}

\section{Details of Reward Models}
\label{app:reward_models}

Table~\ref{tab:reward_models} summarizes the reward models used for both guidance and evaluation throughout our experiments. The models are selected from RewardBench~\citep{lambert2024rewardbench} and vary in size, base architecture, and training methodologies.

\begin{table}[t]
\centering
\small
\setlength{\tabcolsep}{3pt}
\begin{tabular}{lccc}
\toprule
\textbf{Name} & \textbf{Model} & \textbf{Size} & \textbf{Base} \\
\midrule
Skywork & Skywork-Reward-Llama-3.1 & 8B & LLaMA-3.1 \\
LlamaOB & Llama-3-OffsetBias-RM & 8B & LLaMA-3 \\
Intern & InternLM2-Reward & 7B & InternLM \\
Eurus & Eurus-RM & 7B & LLaMA-2 \\
GRM & GRM-Llama3.2 & 3B & LLaMA-3.2 \\
QRM & QRM-Llama3.1 & 8B & LLaMA-3.1 \\
\bottomrule
\end{tabular}
\caption{Characteristics of reward models used for evaluation and guidance}
\label{tab:reward_models}
\end{table}

\section{Detailed Experimental Setup}
\label{app:exp_details}

\paragraph{Diffusion LLM.} 
We use LLaDA-8B-Instruct as our base Masked Diffusion Model (MDM) for all experiments. This non-autoregressive model employs a diffusion-based architecture that iteratively unmasks tokens, as described in Appendix~\ref{app:mdm}.

\paragraph{Reward Models.} 
We select six publicly available reward models~\citep{liu2024skywork, park2024offsetbias, cai2024internlm2, yuan2024advancing, dorka2024quantile, yang2024regularizing} of varying sizes and architectures (see Table~\ref{tab:reward_models}) from RewardBench~\citep{lambert2024rewardbench}.\footnote{\url{https://huggingface.co/spaces/allenai/reward-bench}} We choose models balancing performance and computational feasibility given our resources. Each reward model was previously trained to predict human preferences across diverse text generation tasks, and their outputs are normalized using pre-computed statistics shown in Table~\ref{tab:reward_stats}.

\paragraph{Datasets.} 
We evaluate our approach on three benchmarks:

\begin{enumerate}[nosep,leftmargin=*]
\item \textbf{RewardBench}~\citep{lambert2024rewardbench}: comprehensive evaluation of helpfulness, harmlessness, and honesty.
\item \textbf{MT-Bench}~\citep{zheng2023judging}: multi-turn instruction-following across diverse domains.
\item \textbf{Keyword-Constrained Generation Task}: specifically designed for this study, requiring incorporation of specific keywords while maintaining coherence, testing flexible token selection (dataset construction detailed in Appendix~\ref{app:keyword-dataset}).
\end{enumerate}

\paragraph{Additional Metrics.} 
We also employ additional standard metrics alongside GOD:

\begin{itemize}[nosep,leftmargin=*]
\item \textit{Win Rate}: Proportion of prompts where a method's output is rated higher by reward models or LLM judges.
\item \textit{Perplexity}: Standard measure of text fluency and coherence, particularly relevant in keyword-constrained tasks.
\end{itemize}

\paragraph{General Implementation Details.} 
Unless otherwise specified, general experimental settings are:

\begin{itemize}[nosep,leftmargin=*]
\item Diffusion steps: 128
\item Block size: 32 tokens
\item Maximum output length: 256 tokens
\item Optimal reward scales individually selected per reward model from the set $\{0.01, 0.1, 1.0, 2.0, 4.0, 8.0, 16.0, 32.0\}$ using the RewardBench validation set.
\item Hardware: NVIDIA A100 GPUs (40GB VRAM)
\end{itemize}

Reward signals were normalized using statistics summarized in Table~\ref{tab:reward_stats}.

\section{LLM-as-a-judge Evaluation Details}
\label{app:llm_judge_setup}

For multi-turn dialogue evaluations using MT-Bench, we employ GPT-4.1 (OpenAI API) as a judge model to determine which of two provided responses is of higher quality. The exact prompt used for the evaluation is presented in Figure~\ref{fig:judge_prompt}.

\begin{figure}[ht]
\centering
\begin{tabular}{|p{0.9\columnwidth}|}
\hline
\vspace{2pt}
\textbf{System Prompt:} \\[4pt]
You are a strict evaluator. Decide which answer (1 or 2) is higher quality.\\[4pt]
If answer 1 is better, respond with 1.\\[4pt]
If answer 2 is better, respond with 2.\\[4pt]
Respond ONLY with one character: 1 or 2.\\[4pt]
\hline
\vspace{2pt}
\textbf{User Prompt:} \\[4pt]
Answer 1: \\[2pt]
\textit{[First model-generated response here]}\\[6pt]
Answer 2: \\[2pt]
\textit{[Second model-generated response here]}\\[4pt]
\hline
\end{tabular}
\caption{Exact prompt template used for GPT-4.1 judge evaluations. The model compares two responses and selects the higher-quality response.}
\label{fig:judge_prompt}
\end{figure}

For each comparison, GPT-4.1 returns either \texttt{"1"} or \texttt{"2"} directly indicating the preferred response.

\section{Detailed LLM Judge Results}
\label{app:full_llm_judge}

Table~\ref{tab:llm_judge_full_results} presents detailed results of the LLM-as-a-judge evaluations, showing win rates separately for each ordering configuration.

\begin{table*}[ht]
\centering
\renewcommand{\arraystretch}{1.2}
\setlength{\tabcolsep}{12pt}
\small
\begin{tabular}{l cc cc}
\toprule
\multirow{2}{*}{Reward Model} & \multicolumn{2}{c}{RWS first (\%)} & \multicolumn{2}{c}{RWS second (\%)} \\
\cmidrule(r){2-3}\cmidrule(l){4-5}
 & RWS Win & Default Win & RWS Win & Default Win \\
\midrule
Skywork & \textbf{65.0} & 35.0 & \textbf{65.0} & 35.0 \\
LlamaOB & \textbf{62.5} & 37.5 & \textbf{62.5} & 37.5 \\
Intern & \textbf{61.25} & 38.75 & \textbf{60.0} & 40.0 \\
Eurus & \textbf{65.0} & 35.0 & \textbf{58.75} & 41.25 \\
GRM & \textbf{65.0} & 35.0 & \textbf{61.25} & 38.75 \\
QRM & \textbf{60.0} & 40.0 & \textbf{60.0} & 40.0 \\
\bottomrule
\end{tabular}
\caption{Full LLM judge-based evaluation results for each response ordering configuration.}
\label{tab:llm_judge_full_results}
\end{table*}

\section{Qualitative Multi-turn Examples}
\label{app:multi-turn-example}

In this section, we present qualitative examples contrasting Reward-Weighted Sampling (RWS) and Default Sampling methods in multi-turn dialogue settings.

\paragraph{Brief Comparison} Figure~\ref{fig:multi-turn} provides a concise comparison highlighting the key differences in coherence and detail between the two methods across a two-turn dialogue scenario.

\begin{figure*}[ht]
\small
\centering
\renewcommand{\arraystretch}{1.4}
\setlength{\tabcolsep}{8pt}
\begin{tabular}{p{0.15\textwidth} | p{0.38\textwidth} | p{0.38\textwidth}}
\toprule
\textbf{Turn} & \textbf{Default Sampling (Baseline)} & \textbf{Reward-Weighted Sampling (Ours)} \\
\midrule

\textbf{Prompt 1} & 
\multicolumn{2}{p{0.80\textwidth}}{\textbf{Explain what's base rate fallacy and list five specific examples of how politicians use it for campaigns.}} \\

\midrule

\textbf{Answer 1} & 
- Defines base rate fallacy adequately but provides repetitive examples: \newline
  \texttt{"Highlighting candidate's past successes to suggest future success."} (Repeated)\newline
  \texttt{"Using candidate's past failures to suggest future failures."} (Repeated)\newline
- Includes logically inconsistent example:
  \texttt{"Highlighting opponent's past successes."} &
- Clearly defines base rate fallacy:\newline
  \texttt{"Overestimating or underestimating base rates of events."}\newline
- Provides diverse examples:
\begin{enumerate}[nosep,leftmargin=*]
    \item \texttt{"Highlighting negative instances selectively."}
    \item \texttt{"Exaggerating own policy successes."}
    \item \texttt{"Presenting misleading statistics."}
    \item \texttt{"Ignoring evidence selectively."}
    \item \texttt{"Creating false biases favoring own policies."}
\end{enumerate} \\

\midrule

\textbf{Prompt 2} & 
\multicolumn{2}{p{0.80\textwidth}}{\textbf{Provide a detailed plan for an election campaign using the first example.}} \\

\midrule

\textbf{Answer 2} &
- Repetitive, vague steps without actionable details:\newline
  \texttt{"Emphasize the candidate’s past successes repeatedly."}\newline
- Lacks concrete strategies. &
- Coherent and actionable strategies clearly aligned with the example:
\begin{enumerate}[nosep,leftmargin=*]
    \item \texttt{"Highlight opponent's negative past events through media/social media."}
    \item \texttt{"Downplay own positive records selectively."}
    \item \texttt{"Use emotional appeals in negative campaigns."}
    \item \texttt{"Leverage media to amplify opponent's negatives."}
    \item \texttt{"Conduct targeted social media outreach."}
\end{enumerate} \\

\bottomrule
\end{tabular}

\caption{
Comparison of multi-turn responses between Default Sampling (Baseline) and Reward-Weighted Sampling (Ours). 
Our method produces clearer definitions, diverse examples, and coherent follow-up plans, showcasing the effectiveness of reward guidance in improving logical consistency and practical applicability. 
Full responses are provided in Figure~\ref{fig:multi-turn-sample-full}.
}
\label{fig:multi-turn}
\end{figure*}

\paragraph{Detailed Comparison (Full Version)} For a more comprehensive illustration, Figure~\ref{fig:multi-turn-sample-full} shows the complete dialogue interactions, clearly demonstrating how RWS effectively maintains coherence and context across multiple turns compared to Default Sampling.

\begin{figure*}[ht]
\small
\centering
\renewcommand{\arraystretch}{1.3}
\setlength{\tabcolsep}{6pt}
\begin{tabular}{p{0.12\textwidth}|p{0.41\textwidth}|p{0.41\textwidth}}
\toprule
\textbf{Turn} & \textbf{Default Sampling (Baseline)} & \textbf{Reward-Weighted Sampling (Ours)} \\
\midrule

\textbf{Prompt 1} &
\multicolumn{2}{p{0.86\textwidth}}{\texttt{Explain what's base rate fallacy and list five specific examples of how politicians use it for campaigns.}} \\[6pt]

\midrule

\textbf{Answer 1} &
\texttt{The base rate fallacy is a cognitive bias that occurs when an individual judges the probability of an event based on the frequency of similar event, rather than the actual probability of the event occurring. Five examples of how politicians use the base rate fallacy for campaigns are:
\begin{enumerate}[nosep,leftmargin=*]
\item Highlighting to the the success of a candidate's past policies to suggest that they current policy will be be successful.
\item Using the candidate's past failures to win promises to suggest that they are not capable of keeping future promises.
\item Emphasizing the candidate's past successes to suggest that they will continue to be successful in future endeavors.
\item Using the candidate's past failures to suggest that they will be fail in future future endeavors.
\item Highlighting the opposing candidate's past successes to suggest that they will be successful in the future.
\end{enumerate}} &
\texttt{The base rate fallacy is a cognitive bias where people overestimate the base rate of an event or underestimate the base rate of another event. Five examples of how politicians use the base rate fallacy for campaigns are:
\begin{enumerate}[nosep,leftmargin=*]
\item Highlighting negative instances: Politicians may use the base rate fallacy by highlighting negative instances of their opponents while ignoring the positive instances of their own policies.
\item Exaggerating successes: Politicians may use the base base rate fallacy by exaggerating the successes of their own policies while downplaying the failures of their opponents' policies.
\item Misleading statistics: Politicians may use the base rate fallacy by presenting misleading statistics or statistics that are manipulated to support their argument.
\item Ignoring the evidence: Politicians may use the base base base rate fallacy by ignoring the evidence against their' policies while focusing on positive evidence.
\item False bias: Politicians may use the base base base base rate fallacy by creating a false bias in favor of their own policies while ignoring the evidence against their policies.
\end{enumerate}} \\[6pt]

\bottomrule
\end{tabular}

\caption{
Complete responses to Prompt 1 from both Default Sampling and Reward-Weighted Sampling methods, as referenced in Figure~\ref{fig:multi-turn}.
}
\end{figure*}

\begin{figure*}[ht]
\small
\centering
\renewcommand{\arraystretch}{1.3}
\setlength{\tabcolsep}{6pt}
\begin{tabular}{p{0.12\textwidth}|p{0.41\textwidth}|p{0.41\textwidth}}
\toprule
\textbf{Turn} & \textbf{Default Sampling (Baseline)} & \textbf{Reward-Weighted Sampling (Ours)} \\
\midrule

\textbf{Prompt 2} &
\multicolumn{2}{p{0.86\textwidth}}{\texttt{Provide a detailed plan for an election campaign using the first example.}} \\[6pt]

\midrule

\textbf{Answer 2} &
\texttt{
1. Develop a well and campaign message that emphasizes the candidate's past successes and how they will continue to be successful in the future.
2. Create a campaign website that highlights the candidate's past successes and how they will continue to be successful in the future.
3. Launch a media campaign that emphasizes the candidate's past successes and how they will continue to be successful in the future.
4. Hold rallies and rallies that emphasize the candidate's past successes and how they will continue to be successful in the future.
5. Organize and events that emphasize the candidate's past successes and how they will continue to be successful in the future.
6. Launch a fundraising campaign that emphasizes the candidate's past successes and how they will continue to be successful in the future.
7. Create a website that emphasizes the candidate's past successes successes and how they will continue to be successful in the future.
8. Launch a media campaign that emphasizes the candidate's past successes and how they will continue to be successful in the future.
9. Hold public events that emphasize the candidate's past successes and how they will continue to be successful in the future.
10. Promote the campaign by launching a social media campaign that emphasizes the candidate's past successes.
} &
\texttt{
A detailed plan for an election campaign campaign using the first example of the base rate fallacy could include:
\begin{enumerate}[nosep,leftmargin=*]
\item Highlighting negative instances of the opponent: The campaign team could highlight negative instances instances of the opponent's policies, such as past failures, missteps, or mis mistakes. This could be done through media coverage, social media, and public events.
\item Ignoring positive instances of the candidate's policies: The campaign team could ignore positive instances of the candidate's policies, such as past successes or positive outcomes. This could be done by not mentioning them in the campaign materials or by downplaying their significance.
\item Using emotional appeals: The campaign team could use emotional appeals to highlight the negative instances of the opponent's policies. This could be done by using stories or statistics that highlight the negative impact of the opponent's policies.
\item Using media coverage: The campaign team could use media coverage to highlight the negative instances of the opponent's policies. This could be done by reaching out to media outlets and providing them with negative instances of the opponent's policies.
\item Using social media: The campaign team could use social media to highlight the negative instances of the opponent's policies. This could be done by creating posts.
\end{enumerate}
} \\[6pt]

\bottomrule
\end{tabular}

\caption{
Complete responses to Prompt 2 from both Default Sampling and Reward-Weighted Sampling methods, as referenced in Figure~\ref{fig:multi-turn}.
}
\label{fig:multi-turn-sample-full}
\end{figure*}

\section{Dataset Procedure}
\label{app:keyword-dataset}

To construct the keyword-constrained generation dataset used in our experiments, we adopted a systematic procedure to ensure topic diversity, task relevance, and quality:

\paragraph{Step 1: Topic Selection.} We selected 10 representative topics across various domains to ensure comprehensive coverage of real-world scenarios and broad generalization. The selected topics were:

\begin{enumerate}[nosep]
\item Science
\item Technology
\item Economy
\item Health
\item Sports
\item Environment
\item Politics
\item Education
\item Culture
\item Entertainment
\end{enumerate}

\paragraph{Step 2: Prompt and Keyword Generation.} Using the GPT-4.1 API, we generated an initial pool of 100 keyword-constrained prompts, equally distributed across the 10 selected topics (10 prompts per topic). Each prompt was designed to incorporate specific anchor keywords naturally within coherent and contextually appropriate responses.

\paragraph{Step 3: Quality Control and Expert Validation.} The generated dataset underwent rigorous review by three PhD students. They independently evaluated each prompt based on difficulty and overall quality. Prompts identified with issues such as duplication, overly simplistic difficulty, grammatical errors, ambiguity, or poor contextual quality were flagged for removal.

\paragraph{Step 4: Refinement and Final Selection.} Based on expert validation, 15 prompts were excluded due to quality concerns. Subsequently, an additional set of prompts was generated using GPT-4.1, maintaining the same distribution across topics. After this iterative refinement, we finalized a dataset consisting of 100 keyword-constrained prompts, equally distributed among the 10 selected topics.

This structured methodology ensured the dataset was both diverse and representative, suitable for effectively evaluating the non-autoregressive generation capabilities of our model.

\section{Computational Overhead Analysis}
\label{app:comp_overhead}

To evaluate the additional computational overhead introduced by our proposed Reward-Weighted Sampling (RWS) method compared to the standard confidence-based sampling baseline, we conducted experiments measuring inference time and GPU memory usage.

Experiments were performed on a single NVIDIA A100 GPU (40GB VRAM) with batch size 1, using the keyword-constrained generation dataset (Experiment 3). We measured the average inference time per sample across 200 prompts randomly selected from this dataset. GPU memory usage was recorded using PyTorch's built-in memory monitoring functions, with additional GPU memory usage primarily attributed to the use of reward models.

Results are summarized in Table~\ref{tab:computational_overhead}. We observed an inference time increase ranging from approximately 21\% to 33\% with RWS compared to the baseline method, varying by the specific reward model used. GPU memory overhead due to reward models varied significantly, showing an increase ranging from approximately 6.5 GiB to 14.8 GiB depending on the reward model.

These results indicate that RWS introduces a manageable computational overhead, maintaining practicality for real-world applications while significantly enhancing generation quality.

\begin{table}[t]
\centering
\small
\setlength{\tabcolsep}{5pt}  
\begin{tabular}{lcc}
\toprule
\textbf{Method} & \textbf{Avg. Inference} & \textbf{Additional GPU} \\
 & \textbf{Time (s)} & \textbf{Memory (MiB)} \\
\midrule
Default & 20.197 & - \\
RWS (Skywork) & 26.288 & 14,812 \\
RWS (LlamaOB) & 26.046 & 14,812 \\
RWS (Intern) & 26.560 & 14,502 \\
RWS (Eurus) & 26.950   & 14,078 \\
RWS (GRM) & 24.597 & 6,542 \\
RWS (QRM) & 26.330 & 14,812 \\
\bottomrule
\end{tabular}
\caption{Computational overhead comparison between confidence-based sampling and Reward-Weighted Sampling (RWS).}
\label{tab:computational_overhead}
\end{table}

\section{Computational Efficiency and Flexibility}
\label{app:efficiency}

While RWS incurs additional computational overhead from reward model inference (21-33\% increased inference time as detailed in Appendix~\ref{app:comp_overhead}), this cost can be flexibly managed through frequency ablation.

\begin{table}
\centering
\small
\caption{Trade-off between non-autoregressive behavior and computational cost via frequency ablation.}
\label{tab:frequency_ablation}
\begin{tabular}{lcc}
\toprule
\textbf{Method} & \textbf{Avg. GOD} & \textbf{Avg. Inference Time} \\
\midrule
Default & 0.95 & 20.20s \\
\midrule
RWS (every step) & \textbf{2.22} & 26.13s (+29.4\%) \\
RWS (every 2 steps) & 1.56 & 23.66s (+17.1\%) \\
RWS (every 4 steps) & 1.19 & 21.98s (+8.8\%) \\
\bottomrule
\end{tabular}
\end{table}

As shown in Table~\ref{tab:frequency_ablation}, reducing reward guidance frequency provides a smooth trade-off between non-autoregressive strength (GOD) and computational cost.
This flexibility allows practitioners to balance performance and efficiency according to their specific requirements.
Moreover, compared to alternative approaches like DPO or RLHF that require extensive retraining, RWS provides a practical inference-time solution for incorporating reward signals into diffusion LLMs.

\section{Artifact and Data Usage}

\paragraph{License for Artifacts.}
All reward models and datasets employed in this research, including RewardBench \citep{lambert2024rewardbench} and MT-Bench \citep{zheng2023judging}, are publicly available under permissive licenses for research purposes. Specifically, RewardBench is released under an MIT License, and MT-Bench is provided under Apache 2.0 License.

\paragraph{Artifact Use Consistent with Intended Use.}
The artifacts used, including publicly available reward models from RewardBench, are employed strictly in alignment with their intended research-oriented use as clearly specified in their original publications. No derivative artifacts were created or distributed beyond the research context, complying with the original access conditions.

\paragraph{Personally Identifying Information and Offensive Content.}
The datasets and prompts utilized in our experiments (RewardBench, MT-Bench, and keyword-constrained generation tasks) were reviewed to ensure they do not contain personally identifying information or offensive content. All prompts underwent careful expert validation and refinement (Appendix~\ref{app:keyword-dataset}), explicitly removing any content that could uniquely identify individuals or be considered offensive.

\paragraph{Documentation of Artifacts.}
Comprehensive documentation for each artifact (RewardBench and MT-Bench) is provided by their original authors, clearly detailing domains, languages, and linguistic phenomena covered. Our custom-created keyword-constrained generation dataset includes clear documentation on domains, representative topics, and the validation process, as described in Appendix~\ref{app:keyword-dataset}.

\section{Use of AI Assistants}
We used ChatGPT to help refine manuscript wording, improve clarity, and assist with LaTeX formatting. All research, experiments, and analyses are solely the authors’ work.

\end{document}